\setlist{nolistsep}
\newcommand{\sign}{\operatornamewithlimits{sign}}
\newtheorem{theorem}{Theorem}
\newtheorem{corollary}{Corollary}
\newtheorem{definition}{Definition}
\newtheorem{lemma}{Lemma}
\newcommand{\cO}{\mathcal{O}}
\newcommand{\bS}{\mathbf{S}}
\newcommand{\RFF}{\text{RFF}}
\newcommand{\SORF}{\text{SORF}}
\newcommand{\ORF}{\text{ORF}}
\newcommand{\bR}{\mathbf{R}}
\newcommand{\bW}{\mathbf{W}}
\newcommand{\bG}{\mathbf{G}}
\newcommand{\bg}{\mathbf{g}}
\newcommand{\textprime}{$^\prime$}
\newcommand{\bH}{\mathbf{H}}
\newcommand{\bD}{\mathbf{D}}
\newcommand{\bA}{\mathbf{A}}
\newcommand{\Var}{\text{Var}}
\newcommand{\bQ}{\mathbf{Q}}
\newcommand{\bx}{\mathbf{x}}
\newcommand{\bw}{\mathbf{w}}
\newcommand{\bu}{\mathbf{u}}
\newcommand{\bv}{\mathbf{v}}
\newcommand{\norm}[1]{\lVert #1 \rVert}
\newcommand{\E}{\mathbb{E}}
\newcommand{\by}{\mathbf{y}}
\newcommand{\bz}{\mathbf{z}}
\renewcommand{\[}{\begin{equation*}}
\renewcommand{\]}{\end{equation*}}
\title{Orthogonal Random Features}
\author{
Felix Xinnan Yu \,\, Ananda Theertha Suresh \,\, Krzysztof Choromanski \\ \textbf{Daniel Holtmann-Rice \,\, Sanjiv Kumar}\\ \\
Google Research, New York \\
\{\texttt{felixyu}, \texttt{theertha}, \texttt{kchoro}, \texttt{dhr}, \texttt{sanjivk}\}\texttt{@google.com}
}
\begin{document}
\maketitle
\begin{abstract}
\vspace{-0.2cm}
We present an intriguing discovery related to Random Fourier Features: in Gaussian kernel approximation, 
replacing the random Gaussian matrix by
a properly scaled random orthogonal matrix significantly decreases kernel approximation error. We call this technique Orthogonal Random Features (ORF), and provide theoretical and empirical justification for this behavior.
Motivated by this discovery, we further propose Structured Orthogonal Random Features (SORF), which uses a class of structured discrete orthogonal matrices to speed up the computation. The method reduces the time cost from $\mathcal{O}(d^2)$ to $\mathcal{O}(d \log d)$, where $d$ is the data dimensionality, with almost no compromise in kernel approximation quality compared to ORF. Experiments on several datasets verify the effectiveness of ORF and SORF over the existing methods. We also provide discussions on using the same type of discrete orthogonal structure for a broader range of applications. 
\end{abstract}

\section{Introduction}

Kernel methods are widely used in nonlinear learning \cite{cortes1995support}, but they are computationally expensive for large datasets. Kernel approximation is a powerful technique to make kernel methods scalable, by mapping input features into a new space where dot products approximate the kernel well~\cite{rahimi2007random}. With accurate kernel approximation, efficient linear classifiers can be trained in the transformed space while retaining the expressive power of nonlinear methods \cite{joachims2006training, shalev2011pegasos}.

Formally, given a kernel $K(\cdot, \cdot): \mathbb{R}^d \times \mathbb{R}^d \rightarrow \mathbb{R}$, kernel approximation methods seek to find a nonlinear transformation $\phi(\cdot): \mathbb{R}^d \rightarrow \mathbb{R}^{d'}$ such that, for any $\bx, \by \in \mathbb{R}^d$
\[
K(\bx, \by) \approx \hat{K}(\bx, \by) = \phi(\bx)^T \phi(\by).
\]

Random Fourier Features \cite{rahimi2007random} are used widely in approximating smooth, shift-invariant kernels. This technique requires the kernel to exhibit two properties: 1) shift-invariance, \emph{i.e.} $K(\bx,\by) = K(\Delta)$ where $\Delta = \bx - \by$; and 2) positive semi-definiteness of $K(\Delta)$ on $\mathbb{R}^d$. The second property guarantees that the Fourier transform of $K(\Delta)$ is a nonnegative function \cite{bochner1955harmonic}. Let $p(\bw)$ be the Fourier transform of $K(\bz)$. Then,
\[
K({\bx- \by}) = \int_{\mathbb{R}^d}  p(\bw)e^{j\bw^T(\bx - \by)} d\bw.
\]
This means that one can treat $p(\bw)$ as a density function and use Monte-Carlo sampling to derive the following nonlinear map for a real-valued kernel:
\[
\phi(\bx) = \sqrt{1/D}  \big[ \sin (\bw_1^T \bx), \cdots, \sin(\bw_{D}^T\bx), \cos (\bw_1^T \bx), \cdots, \cos(\bw_{D}^T\bx) \big]^T,
\]
where $\bw_i$ is sampled i.i.d.\ from a probability distribution with density $p(\bw)$.
Let $\bW = \big[\bw_1, \cdots, \bw_D \big]^T$. The linear transformation $\bW\bx$ is central to the above computation since, 
\begin{itemize}[leftmargin=*,label=\raisebox{0.25ex}{\tiny$\bullet$}]
    \item The choice of matrix $\bW$ determines how well the estimated kernel converges to the actual kernel;
	\item The computation of $\bW\bx$ has space and time costs of $\mathcal{O}(Dd)$. This is expensive for high-dimensional data, especially since $D$ is often required to be larger than $d$ to achieve low approximation error.
\end{itemize}

\begin{figure}
\centering
\subfigure[\texttt{USPS}]{\includegraphics[width=0.33\textwidth]{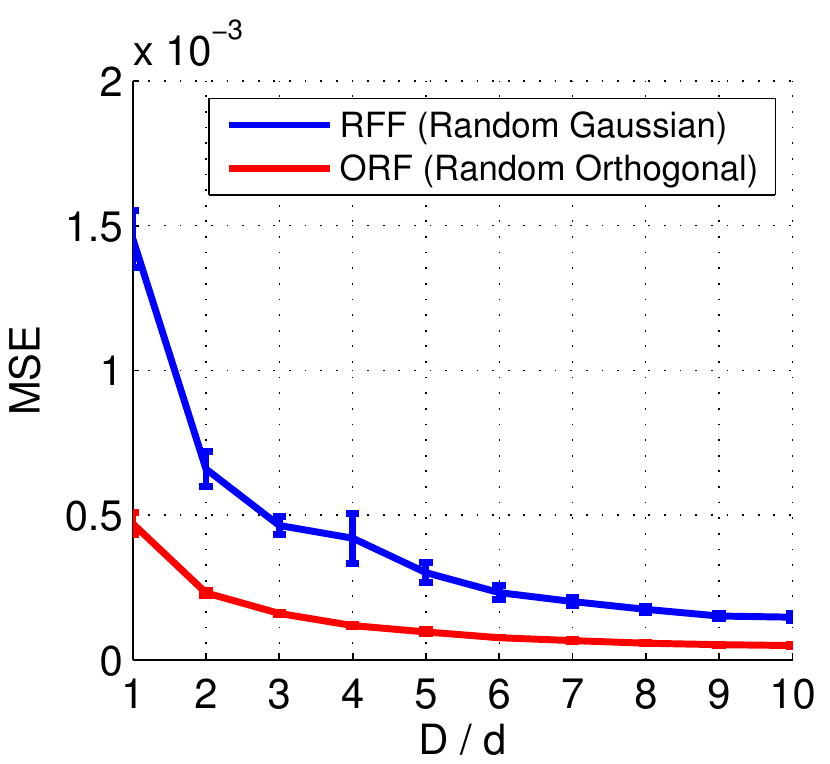}}
\hspace{-0.1cm}
\subfigure[\texttt{MNIST}]{\includegraphics[width=0.33\textwidth]{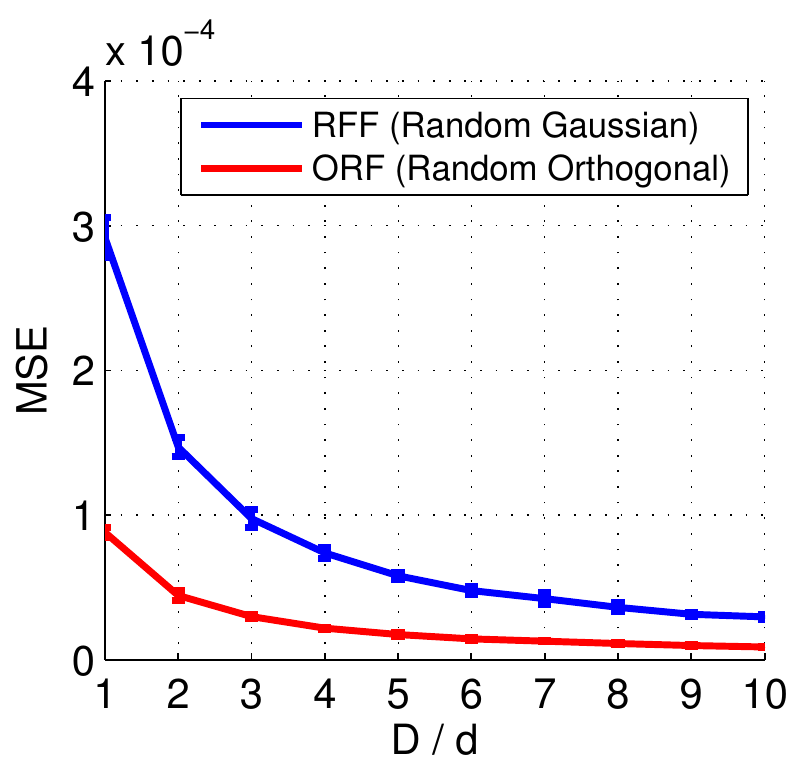}}
\hspace{-0.1cm}
\subfigure[\texttt{CIFAR}]{\includegraphics[width=0.33\textwidth]{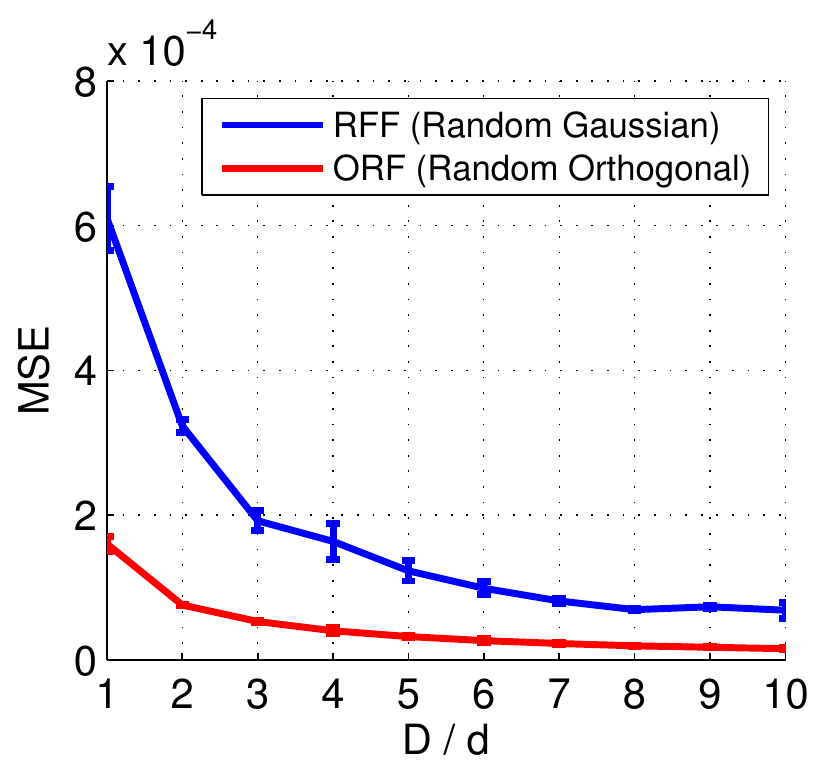}}
\label{fig:motivation}
\vspace{-0.3cm}
\caption{Kernel approximation mean squared error (MSE) for the Gaussian kernel $K(\bx, \by) = e^{-||\bx - \by ||^2 / 2\sigma^2}$. $D$: number of rows in the linear transformation $\bW$. $d$: input dimension. ORF imposes orthogonality on $\bW$ (Section \ref{sec:ORF}).}
\vspace{-0.1cm}
\end{figure}

In this work, we address both of the above issues. We first show an intriguing discovery (Figure \ref{fig:motivation}):  by enforcing orthogonality on the rows of $\bW$, the kernel approximation error can be significantly reduced. We call this method Orthogonal Random Features (ORF). Section \ref{sec:ORF} describes the method and provides theoretical explanation for the improved performance.

Since both generating a $d \times d$ orthogonal matrix ($\mathcal{O}(d^3)$ time and $\mathcal{O}(d^2)$ space) and computing the transformation ($\mathcal{O}(d^2)$ time and space) are prohibitively expensive for high-dimensional data, we further propose Structured Orthogonal Random Features (SORF) in Section \ref{sec:SORF}. The idea is to replace random orthogonal matrices by a class of special structured matrices consisting of products of binary diagonal matrices and Walsh-Hadamard matrices. SORF has fast computation time, $\mathcal{O}(D \log d)$, and almost no extra memory cost (with efficient in-place implementation). We show extensive experiments in Section \ref{sec:exp}. 
We also provide theoretical discussions in Section \ref{sec:extensions} of applying the structured matrices in a broader range of applications where random Gaussian matrix is used.

\section{Related Works}
\label{sec:related}
Explicit nonlinear random feature maps have been constructed for many types of kernels, such as intersection kernels \cite{maji2009max}, generalized RBF kernels \cite{sreekanth2010generalized}, skewed multiplicative histogram kernels \cite{li2010random}, additive kernels \cite{vedaldi2012efficient}, and polynomial kernels \cite{kar2012random,pennington2015spherical}. 
In this paper, we focus on approximating Gaussian kernels following the seminal Random Fourier Features (RFF) framework \cite{rahimi2007random}, which has been extensively studied both theoretically and empirically \cite{yang2012nystrom, rudi2016generalization, sriperumbudur2015optimal}.

Key to the RFF technique is Monte-Carlo sampling. It is well known that the convergence of Monte-Carlo can be largely improved by carefully choosing a deterministic sequence instead of random samples \cite{niederreiter2010quasi}. Following this line of reasoning, Yang et al. \cite{yang2014quasi} proposed to use low-displacement rank sequences in RFF. Yu et al. \cite{arxiv_cnm} studied optimizing the sequences in a data-dependent fashion to achieve more compact maps. 
In contrast to the above works, this paper is motivated by an intriguing new discovery that using orthogonal random samples provides much faster convergence. 
Compared to \cite{yang2014quasi}, the proposed SORF method achieves both lower kernel approximation error and greatly reduced computation and memory costs. Furthermore, unlike \cite{arxiv_cnm}, the results in this paper are data independent. 

Structured matrices have been used for speeding up dimensionality reduction \cite{ailon2006approximate}, binary embedding \cite{icml14_cbe}, deep neural networks \cite{cheng2015exploration} and kernel approximation \cite{le2013fastfood, arxiv_cnm, choromanski2016recycling}. 
For the kernel approximation works, in particular, the ``structured randomness'' leads to a minor loss of accuracy, but allows faster computation since the structured matrices enable the use of FFT-like algorithms. Furthermore, these matrices provide substantial model compression since they require subquadratic (usually only linear) space. In comparison with the above works, our proposed methods SORF and ORF are more effective than RFF. In particular SORF demonstrates \emph{both} lower approximation error and better efficiency than RFF. Table \ref{table:comparison} compares the space and time costs of different techniques. 

\begin{table}[t]
\small
\centering
\begin{tabular}{l|l|l|l}
\hline Method & Extra Memory & Time & Lower error than RFF? \\
\hline Random Fourier Feature (RFF) \cite{rahimi2007random} & $\mathcal{O}(Dd)$  & $\mathcal{O}(Dd)$ &  - \\ 
\hline Compact Nonlinear Map (CNM) \cite{arxiv_cnm} & $\mathcal{O}(Dd)$ &  $\mathcal{O}(Dd)$ & Yes (data-dependent)\\
\hline Quasi-Monte Carlo (QMC) \cite{yang2014quasi} & $\mathcal{O}(Dd)$ & $\mathcal{O}(Dd)$ & Yes \\ 
\hline Structured (fastfood/circulant) \cite{arxiv_cnm,le2013fastfood} & $\mathcal{O}(D)$ & $\mathcal{O}(D \log d)$ & No\\ 
\hline \textbf{Orthogonal Random Feature (ORF)} & \bm{$\mathcal{O}(Dd)$} & \bm{$\mathcal{O}(Dd)$} & \textbf{Yes} \\ 
\hline \textbf{Structured ORF (SORF)} & \bm{$\mathcal{O}(D)$} or \bm{$\mathcal{O}(1)$} & \bm{$\mathcal{O}(D\log d)$} & \textbf{Yes}\\ 
\hline 
\end{tabular}
\vspace{+0.2cm}
\caption{Comparison of different kernel approximation methods under the framework of Random Fourier Features \cite{rahimi2007random}. We assume $D \geq d$. The proposed SORF method have $\mathcal{O}(D)$ degrees of freedom. The computations can be efficiently implemented as in-place operations with fixed random seeds. Therefore it can cost $\mathcal{O}(1)$ in extra space.}
\vspace{-0.5cm}
\label{table:comparison}
\end{table}

\section{Orthogonal Random Features}
\label{sec:ORF}
Our goal is to approximate a Gaussian kernel of the form
\[
K(\bx, \by) = e^{-||\bx - \by ||^2 / 2 \sigma^2}.
\]
In the paragraph below, we assume a square linear transformation matrix $\bW \in \mathbb{R}^{D \times d}$, $D = d$. When $D < d$, we simply use the first $D$ dimensions of the result. When $D > d$, we use multiple independently generated random features and concatenate the results. We comment on this setting at the end of this section. 
 
Recall that the linear transformation matrix of RFF can be written as
\begin{equation}
	\bW_{\textrm{RFF}} = \frac{1}{\sigma} \mathbf{\bG},
\label{eq:rff}
\end{equation}
where $\mathbf{G} \in \mathbb{R}^{d \times d}$ is a random Gaussian matrix, with every entry sampled independently from the standard normal distribution. 
Denote the approximate kernel based on the above $\bW_{\text{RFF}}$ as $K_{\text{RFF}}(\bx, \by)$. For completeness, we first show the expectation and variance of $K_{\text{RFF}}(\bx, \by)$.
\begin{lemma}(Appendix~\ref{app:RFF})
\label{lem:RFF}
$K_{\text{RFF}}(\bx, \by)$ is an unbiased estimator of the Gaussian kernel, i.e.,
$
\mathbb{E}(K_{\text{RFF}}(\bx, \by)) = e^{-||\bx - \by ||^2 / 2 \sigma^2}.
$
Let $z = ||\bx - \by|| / \sigma$. The variance of $K_{\text{RFF}}(\bx, \by)$ is 
$
\Var\left(K_{\text{RFF}}(\bx, \by) \right) = \frac{1}{2D} 
\left(1-e^{-z^2}\right)^2.
$
\end{lemma}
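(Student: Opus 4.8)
The plan is to reduce $K_{\text{RFF}}(\bx,\by)$ to an average of i.i.d.\ cosines and then invoke the Gaussian characteristic function; this is essentially the original RFF analysis of \cite{rahimi2007random}. First I would expand the inner product of the feature maps: with $\phi(\bx) = \sqrt{1/D}\,[\sin(\bw_1^T\bx),\dots,\sin(\bw_D^T\bx),\cos(\bw_1^T\bx),\dots,\cos(\bw_D^T\bx)]^T$ and $\bW_{\text{RFF}} = \bG/\sigma$, the identity $\sin a \sin b + \cos a \cos b = \cos(a-b)$ gives
\[
K_{\text{RFF}}(\bx,\by) = \phi(\bx)^T\phi(\by) = \frac{1}{D}\sum_{i=1}^D \cos\!\big(\bw_i^T(\bx-\by)\big).
\]
Setting $\Delta = \bx-\by$ and $z = \norm{\Delta}/\sigma$, and noting $\bw_i = \bg_i/\sigma$ with $\bg_i$ independent standard Gaussians, each scalar $u_i := \bw_i^T\Delta = \bg_i^T\Delta/\sigma$ is $\calN(0,z^2)$, and the $u_i$ are i.i.d.

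For unbiasedness I would use the real part of the Gaussian characteristic function, $\E[\cos u_i] = \mathrm{Re}\,\E[e^{ju_i}] = e^{-z^2/2}$, so that linearity of expectation over the finite sum yields $\E[K_{\text{RFF}}(\bx,\by)] = e^{-z^2/2} = e^{-\norm{\bx-\by}^2/2\sigma^2}$. For the variance, independence of the $u_i$ collapses the computation to a single term, $\Var(K_{\text{RFF}}(\bx,\by)) = \tfrac{1}{D}\Var(\cos u_1)$. I would then compute $\E[\cos^2 u_1] = \tfrac12\big(1 + \E[\cos 2u_1]\big) = \tfrac12\big(1 + e^{-2z^2}\big)$ via the double-angle identity and the characteristic function of $2u_1 \sim \calN(0,4z^2)$, and subtract $(\E[\cos u_1])^2 = e^{-z^2}$ to obtain $\Var(\cos u_1) = \tfrac12(1 - e^{-z^2})^2$; dividing by $D$ gives the stated expression $\tfrac{1}{2D}(1-e^{-z^2})^2$.

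There is no genuine obstacle here, as the argument is elementary. The only points requiring care are the normalization bookkeeping — the factor $\sqrt{1/D}$ acting on the two length-$D$ blocks of $\phi$ produces $\tfrac1D\sum_i\cos(\cdot)$, not $\tfrac1{2D}$ — and correctly identifying the variance of each projected coordinate $u_i$ as $z^2 = \norm{\bx-\by}^2/\sigma^2$. The characteristic-function evaluations $\E[e^{jtu}] = e^{-t^2 z^2/2}$ for $u\sim\calN(0,z^2)$ are standard and can simply be quoted.
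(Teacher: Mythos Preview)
Your proposal is correct and follows essentially the same route as the paper's proof: reduce $K_{\text{RFF}}$ to an i.i.d.\ average of cosines, compute $\E[\cos u]=e^{-z^2/2}$, apply the double-angle identity to get $\E[\cos^2 u]=\tfrac12(1+e^{-2z^2})$, and divide the single-term variance by $D$. The only cosmetic difference is that the paper phrases the expectation computations as invocations of Bochner's theorem (viewing $\bw_i\sim N(0,I_d)$ acting on $\bz=(\bx-\by)/\sigma$) rather than directly as the Gaussian characteristic function, and it omits the explicit $\sin a\sin b+\cos a\cos b=\cos(a-b)$ expansion you include; neither changes the substance.
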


The idea of Orthogonal Random Features (ORF) is to impose orthogonality on the matrix on the linear transformation matrix $\mathbf{G}$.
Note that one cannot achieve unbiased kernel estimation by simply replacing $\bG$ by an orthogonal matrix, since the norms of the rows of $\bG$ follow the $\chi$-distribution, while rows of an orthogonal matrix have the unit norm. The linear transformation matrix of ORF has the following form
\begin{equation}
	\bW_{\textrm{ORF}} = \frac{1}{\sigma} \bS \bQ,
	\label{eq:ORF}
\end{equation}
where $\bQ$ is a uniformly distributed random orthogonal matrix\footnote{We first generate the random Gaussian matrix $\bG$ in (\ref{eq:rff}). $\bQ$ is the orthogonal matrix obtained from the QR decomposition of $\bG$. $\bQ$ is distributed uniformly on the Stiefel manifold (the space of all orthogonal matrices) based on the Bartlett decomposition theorem \cite{muirhead2009aspects}.}. 
%(obtained for instance by applying Gram-Schmidt to $\mathbf{G}$ in (\ref{eq:rff})). 
The set of rows of $\bQ$ forms a bases in $\mathbb{R}^d$. $\bS$ is a diagonal matrix, with diagonal entries sampled i.i.d.\ from the $\chi$-distribution with $d$ degrees of freedom. $\bS$ makes the norms of the rows of $\bS \bQ$ and $\mathbf{G}$ identically distributed.

\begin{figure}
\centering
\subfigure[Variance ratio (when $d$ is large)]{\includegraphics[width=0.32\textwidth]{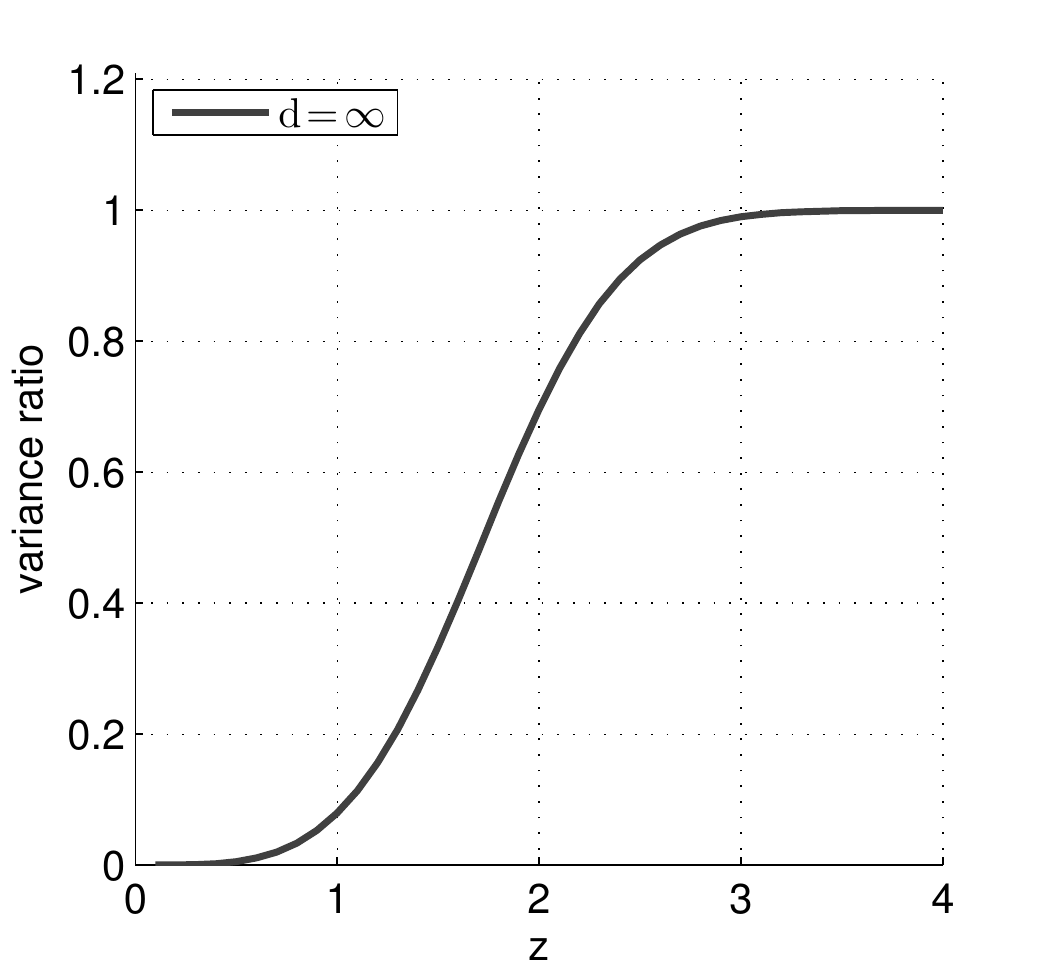}}
\hspace{-0.1cm}
\subfigure[Variance ratio (simulation)]{\includegraphics[width=0.33\textwidth]{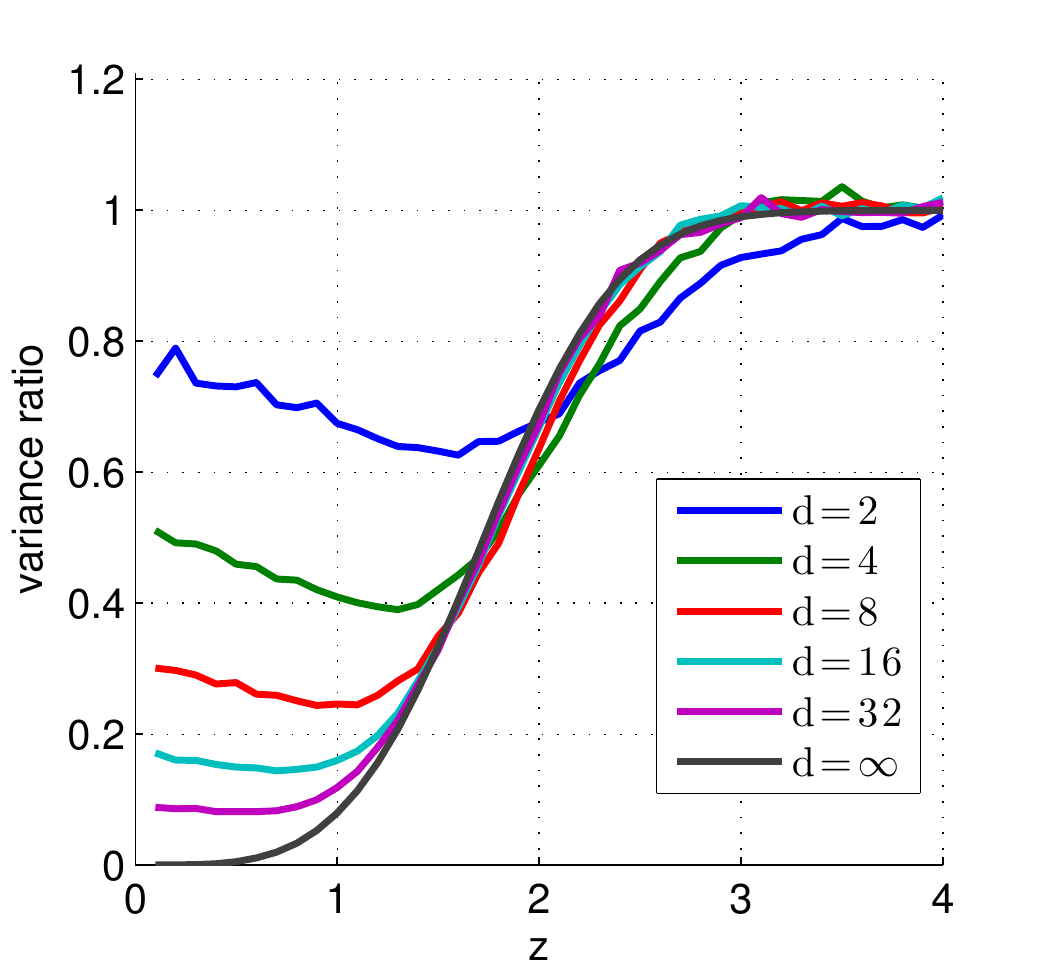}}
\hspace{-0.1cm}
\subfigure[Empirical distribution of $z$]{\includegraphics[width=0.33\textwidth]{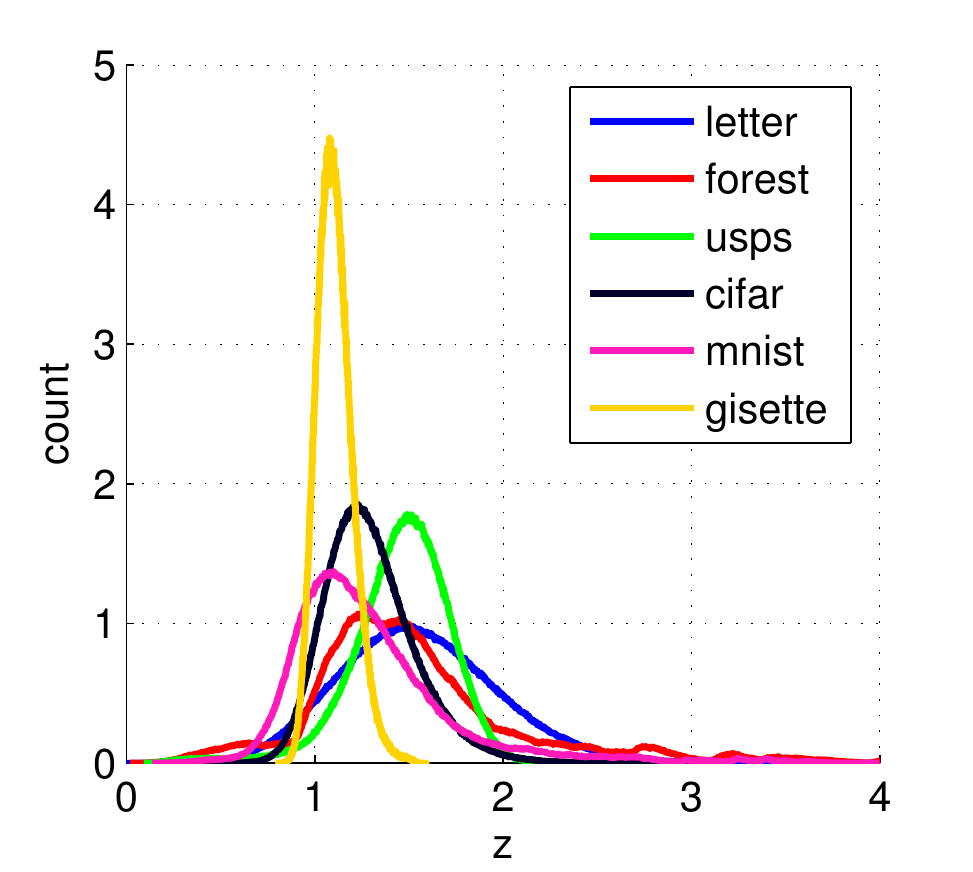}}
\caption{(a) $\Var(K_{\ORF}(\bx, \by)) / \Var(K_{\RFF}(\bx, \by))$ when $d$ is large and $d = D$. $z = ||\bx - \by|| / \sigma$. 
(b) Simulation of $\Var(K_{\ORF}(\bx, \by)) / \Var(K_{\RFF}(\bx, \by))$ when $D = d$. Note that the empirical variance is the Mean Squared Error (MSE). (c) Distribution of $z$ for several datasets, when we set $\sigma$ as the mean distance to 50th-nearest neighbor for samples from the dataset. The count is normalized such that the area under curve for each dataset is 1. Observe that most points in all the datasets have $z < 2$. As shown in (a), for these values of $z$, ORF has much smaller variance compared to the standard RFF.} 
\label{fig:variance}
\vspace{-0.2cm}
\end{figure}

Denote the approximate kernel based on the above $\bW_{\textrm{ORF}}$ as $K_{\text{ORF}}(\bx, \by)$. The following shows that $K_{\text{ORF}}(\bx, \by)$ is an unbiased estimator of the kernel, and it has lower variance in comparison to RFF. 
\begin{theorem}
$K_{\text{ORF}}(\bx, \by)$ is an unbiased estimator of the Gaussian kernel, i.e.,
\[
\mathbb{E}(K_{\text{ORF}}(\bx, \by)) = e^{-||\bx - \by ||^2 / 2 \sigma^2}.
\]
Let $D \leq d$, and $z = ||\bx - \by|| / \sigma$. There exists a function $f$ such that for all $z$, the variance of $K_{\text{ORF}}(\bx, \by)$ is bounded by
\vspace{-0.1cm}
\[
\Var\left(K_{\text{ORF}}(\bx, \by) \right) \leq \frac{1}{2D} \left(\left(1-e^{-z^2}\right)^2 - \frac{D-1}{d}  e^{-z^2} z^4 \right) + \frac{f(z)}{d^2}.
\]
\vspace{-0.5cm}
\end{theorem}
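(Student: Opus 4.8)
The plan is to write $K_{\text{ORF}}(\bx,\by)=\tfrac1D\sum_{i=1}^{D}\cos(\bw_i^{T}\Delta)$ with $\Delta=\bx-\by$ and $\bw_i=\tfrac1\sigma s_i\mathbf{q}_i$, $s_i\sim\chi_d$ i.i.d.\ and $\mathbf{q}_i$ the $i$-th row of $\bQ$, and to isolate the one quantity in which ORF differs from RFF, namely the covariance between two features. I would first note that, because $\bQ$ is Haar-distributed, each $\mathbf{q}_i$ is uniform on the sphere and independent of $s_i$, so $s_i\mathbf{q}_i$ is a standard Gaussian vector and $\bw_i\sim\calN(0,\sigma^{-2}I_d)$ \emph{marginally}; unbiasedness and the single-feature variance $\Var(\cos(\bw_1^{T}\Delta))=\tfrac12(1-e^{-z^2})^2$ (with $z=\norm{\Delta}/\sigma$) then follow exactly as in Lemma~\ref{lem:RFF}. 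Since the rows of $\bQ$ are exchangeable and $D\le d$,
\[
\Var(K_{\text{ORF}}) = \frac{1}{2D}\left(1-e^{-z^2}\right)^2 + \frac{D-1}{D}\,\mathrm{Cov}\!\left(\cos(\bw_1^{T}\Delta),\cos(\bw_2^{T}\Delta)\right),
\]
so the theorem reduces to showing $\mathrm{Cov}(\cos(\bw_1^{T}\Delta),\cos(\bw_2^{T}\Delta))=-\tfrac{1}{2d}e^{-z^2}z^4+O(1/d^2)$ with a $d$-independent bound on the remainder.

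The second step would reduce this covariance to a one-dimensional integral. Setting $X_i=\bw_i^{T}\Delta$ and using $\cos X_1\cos X_2=\tfrac12\cos(X_1{+}X_2)+\tfrac12\cos(X_1{-}X_2)$, note that $X_1\pm X_2=\tfrac1\sigma(s_1\mathbf{q}_1\pm s_2\mathbf{q}_2)^{T}\Delta$. Because $\mathbf{q}_1\perp\mathbf{q}_2$ and $\norm{\mathbf{q}_i}=1$, the vector $s_1\mathbf{q}_1\pm s_2\mathbf{q}_2$ has squared norm $s_1^2+s_2^2\sim\chi^2_{2d}$, and by invariance of Haar measure under multiplication by fixed orthogonal matrices (negating $\mathbf{q}_2$, respectively rotating all rows), $s_1\mathbf{q}_1+s_2\mathbf{q}_2$ and $s_1\mathbf{q}_1-s_2\mathbf{q}_2$ are equal in law and rotationally invariant. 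A rotationally invariant vector has its norm independent of its direction, so $X_1+X_2\stackrel{d}{=}X_1-X_2\stackrel{d}{=}z\,\rho\,U$, where $\rho\sim\chi_{2d}$, $U$ is the first coordinate of a uniform point on $S^{d-1}$, and $\rho\perp U$. Hence $\mathbb{E}[\cos X_1\cos X_2]=\mathbb{E}[\cos(z\rho U)]$, while $\mathbb{E}[\cos X_1]\,\mathbb{E}[\cos X_2]=e^{-z^2}$.

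The third step is a power-series expansion: expanding $\cos$ and using the explicit moments $\mathbb{E}[\rho^{2k}]=2^k\prod_{j=0}^{k-1}(d+j)$, $\mathbb{E}[U^{2k}]=(2k-1)!!\,/\,\prod_{j=0}^{k-1}(d+2j)$, together with $2^k(2k-1)!!/(2k)!=1/k!$, one gets
\[
\mathbb{E}[\cos(z\rho U)]=\sum_{k\ge0}\frac{(-z^2)^k}{k!}\prod_{j=0}^{k-1}\frac{d+j}{d+2j}.
\]
Writing $\prod_{j=0}^{k-1}\tfrac{d+j}{d+2j}=1-\tfrac{k(k-1)}{2d}+\tfrac{r_k(d)}{d^2}$ and using $\sum_k\tfrac{(-z^2)^k}{k!}=e^{-z^2}$ and $\sum_k\tfrac{(-z^2)^k}{k!}k(k-1)=z^4e^{-z^2}$, the first two terms contribute $e^{-z^2}-\tfrac{z^4e^{-z^2}}{2d}$, so $\mathrm{Cov}(\cos X_1,\cos X_2)=-\tfrac{z^4e^{-z^2}}{2d}+\tfrac1{d^2}\sum_k\tfrac{(-z^2)^k}{k!}r_k(d)$. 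Substituting into the displayed variance identity and using $0\le\tfrac{D-1}{D}\le1$ and $z^4e^{-z^2}\ge0$ then gives the claimed inequality with $f(z)=\sum_{k\ge0}\tfrac{z^{2k}}{k!}\sup_{d\ge1}|r_k(d)|$.

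The main obstacle will be making this last step rigorous, i.e.\ bounding $\sup_{d\ge1}|r_k(d)|$ by a fixed polynomial in $k$ (degree four should suffice) so that $f$ is a genuine finite function of $z$ — of order $\mathrm{poly}(z)\,e^{z^2}$ — independent of $d$ and $D$. For $d$ comparable to or larger than $k$ this is a routine Taylor estimate of $\prod_{j=0}^{k-1}\tfrac{d+j}{d+2j}$; for $d$ much smaller than $k$ one exploits that $\tfrac{d+j}{d+2j}\le\tfrac23$ once $j\ge d$, so the product decays geometrically and $r_k(d)$ stays polynomially bounded. The remaining ingredients — interchanging the series with the expectation (dominated convergence, since $|z\rho U|\le z\rho$ and $\chi_{2d}$ has finite exponential moments) and the standard $\chi^2$ and sphere-coordinate moment formulas — are standard.
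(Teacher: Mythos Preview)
Your proposal is correct, and the overall skeleton---marginal Gaussianity of each $\bw_i$ for unbiasedness, the variance decomposition into the RFF diagonal term plus $\tfrac{D-1}{D}\,\mathrm{Cov}(\cos X_1,\cos X_2)$, the product-to-sum identity for $\cos X_1\cos X_2$, and a Taylor expansion with a $1/d^2$ remainder summable in $k$---matches the paper exactly. Where you diverge is in how you evaluate the cross moment $\E[\cos((\bw_1\pm\bw_2)^{T}\bz)]$, and your route is genuinely shorter. The paper keeps the two scalars $s_1\bu_1^{T}\bz$ and $s_2\bu_2^{T}\bz$ separate: it replaces the random orthonormal pair by the first two coordinates of a uniform sphere point, expands $(s_1y(1)\pm s_2y(2))^{2k}$ binomially, evaluates each mixed moment via the Gaussian representation of the sphere, and arrives at the coefficient
\[
\sum_{i=0}^{k}\binom{k}{i}\frac{1}{2^{k}}\,\frac{(d+2i-2)!!\,(d+2k-2i-2)!!}{(d+2k-2)!!\,(d-2)!!},
\]
which then has to be linearized term-by-term via an auxiliary product estimate (the paper's Lemma~\ref{lem:tech1}) and resummed over $i$ before the $k$-sum. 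You instead aggregate first: the observation that $s_1\bq_1\pm s_2\bq_2$ is rotationally invariant with norm $\chi_{2d}$ independent of its direction collapses the whole computation to $\E[\cos(z\rho U)]$ and the single closed product $\prod_{j=0}^{k-1}\tfrac{d+j}{d+2j}$, eliminating the binomial sum over $i$ and the need for Lemma~\ref{lem:tech1}. Both approaches produce the same leading correction $1-\tfrac{k(k-1)}{2d}$ and hence the same $-\tfrac{1}{2d}z^{4}e^{-z^{2}}$ covariance term; your remainder control (Taylor for $d$ large relative to $k$, geometric decay of the product once $j\ge d$ for $d$ small) yields an $f(z)$ of the same $\mathrm{poly}(z)\,e^{z^{2}}$ type as the paper's explicit bound. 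The only thing to be slightly careful about is the independence claim $\rho\perp U$: it holds because, conditionally on $(s_1,s_2)$, the direction of $s_1\bq_1+s_2\bq_2$ is already uniform on $S^{d-1}$ by Haar invariance, so its law does not depend on $(s_1,s_2)$.
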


\begin{proof}
We first show the proof of the unbiasedness.
Let $\bz = \frac{\bx - \by}{\sigma}$, and $z = ||\bz||$, then 
$\mathbb{E}({K}_{ORF}(\bx, \by)) 
= 
\mathbb{E}\left(\frac{1}{D} \sum_{i=1}^D \cos(\bw_i^T \bz)\right)
= \frac{1}{D} \sum_{i=1}^D \mathbb{E} \left( \cos(\bw_i^T \bz)\right)$. 
Based on the definition of ORF, $\bw_1, \bw_2, \dots,  \bw_D$ are $D$ random vectors given by
$
\bw_i = s_i \bu_i,
$
with $\bu_1, \bu_2, \ldots, \bu_d$ a uniformly chosen random orthonormal basis for $\bR^d$, 
and $s_i$'s are independent $\chi$-distributed random variables with $d$ degrees of freedom.
It is easy to show that for each $i$, $\bw_i$ is distributed according to $N(0, \mathbf{I}_d)$, and hence by Bochner's theorem,
\[
\E[\cos(\bw^T \bz)] = e^{-z^2/2}.
\]
We now show a proof sketch of the variance.
Suppose, $a_i = \cos(\bw^T_i \bz)$.
\begin{small}
\begin{align*}
\Var\left(\frac{1}{D}\sum^D_{i=1} a_i\right) 
& = \E\left[\left( \frac{\sum^D_{i=1} a_i}{D} \right)^2\right] - \E\left[\left( \frac{\sum^D_{i=1} a_i}{D} \right)\right]^2 \\
& = \frac{1}{D^2}\sum_i \left(\E[a^2_i]  - \E[a_i]^2\right) +  \frac{1}{D^2}\sum_i \sum_{j \neq i} \left(\E[a_i a_j]  - \E[a_i]\E[a_j]\right) \\
& = \frac{\left(1-e^{-z^2}\right)^2}{2D} + \frac{D(D-1)}{D^2} \left(\E[a_1 a_2]  - e^{-z^2}\right),
\end{align*}
\end{small}
where the last equality follows from symmetry. The first term in the resulting expression is exactly the variance of RFF. 
In order to have lower variance, $\E[a_1 a_2]  - e^{-z^2}$ must be negative. We use the following lemma to quantify this term.
\begin{lemma}(Appendix~\ref{app:neg_cor})
\label{lem:neg_cor}
There is a function $f$ such that 
for any $z$, 
\[
\E[a_i a_j] \leq e^{-z^2} - e^{-z^2} \frac{z^4}{2d} +  \frac{f(z)}{d^2}. 
%\frac{e^{z^2}z^4}{d^2} + \frac{e^{z^2}z^4(z^4+3z^2 +1)}{d^3}
\]
\vspace{-1.5cm}
\end{lemma}
\end{proof}
Therefore, for a large $d$, and $D \leq d$,  the ratio of the variance of ORF and RFF is
\begin{equation}
\frac{\text{Var}(K_{\ORF}(\bx, \by))} {\text{Var}(K_{\text{RFF}}(\bx, \by))} \approx	1 - \frac {(D-1) e^{-z^2} z^4} {d (1 - e^{-z^2})^2}.
\label{eq:ratio}
\end{equation}
Figure \ref{fig:variance}(a) shows the ratio of the variance of ORF to that of RFF when $D = d$ and $d$ is large. 
First notice that this ratio is always smaller than 1, and hence ORF always provides improvement over the conventional RFF. 
Interestingly, we gain significantly for small values of $z$. 
In fact, when $z \rightarrow 0$ and $d \rightarrow \infty$, the ratio is roughly $z^2$ (note $e^x \approx 1 + x$ when $x \rightarrow 0$), and ORF exhibits infinitely lower error relative to RFF. Figure \ref{fig:variance}(b) shows empirical simulations of this ratio. We can see that the variance ratio is close to that of $d=\infty$ (\ref{eq:ratio}), even when $d = 32$, a fairly low-dimensional setting in real-world cases.

\begin{figure}
\centering
\hspace{-0.27cm}	
\subfigure[Bias of ORF\textprime]{\includegraphics[width=0.26\textwidth]{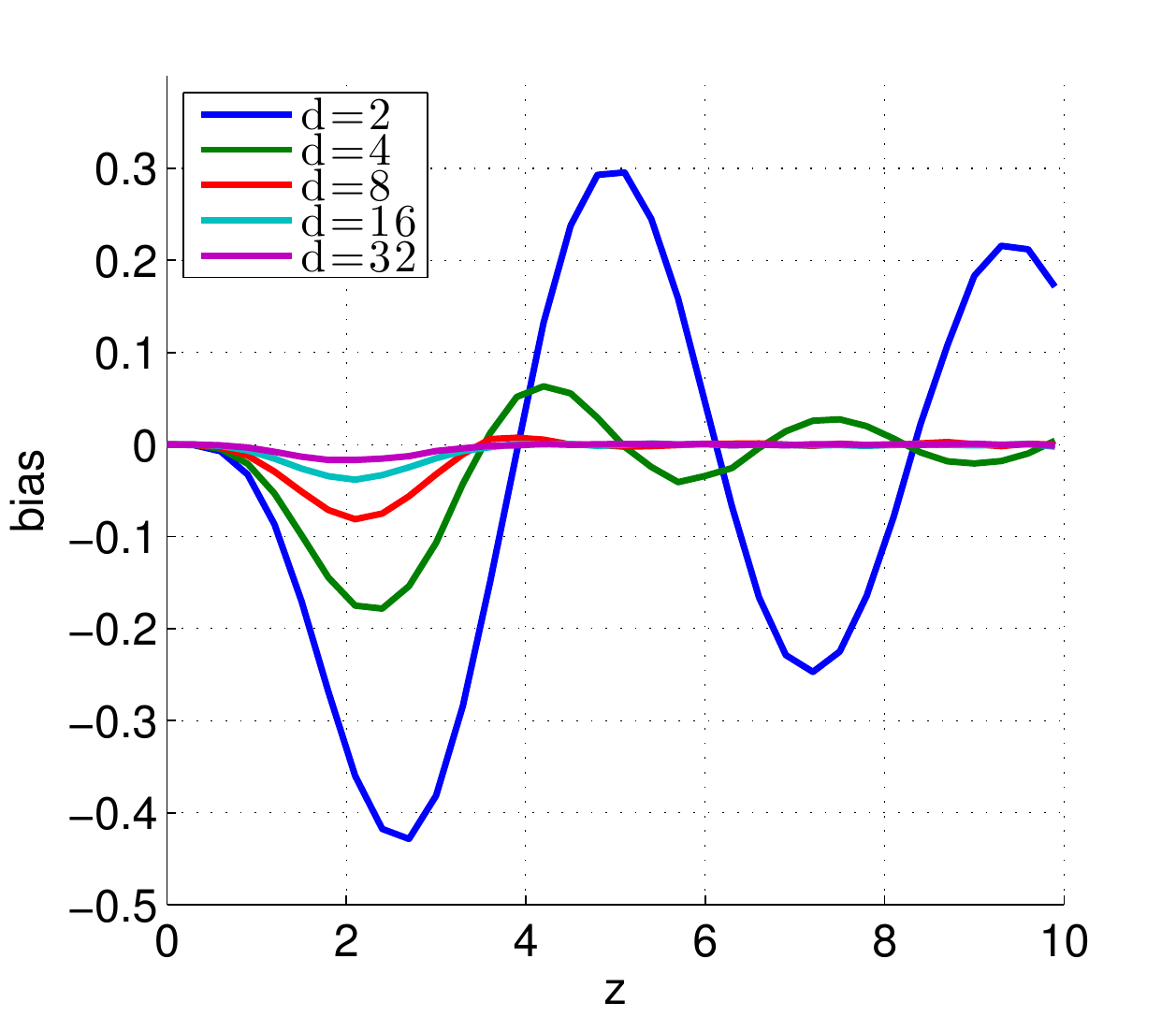}}
\hspace{-0.27cm}
\subfigure[Bias of SORF]{\includegraphics[width=0.26\textwidth]{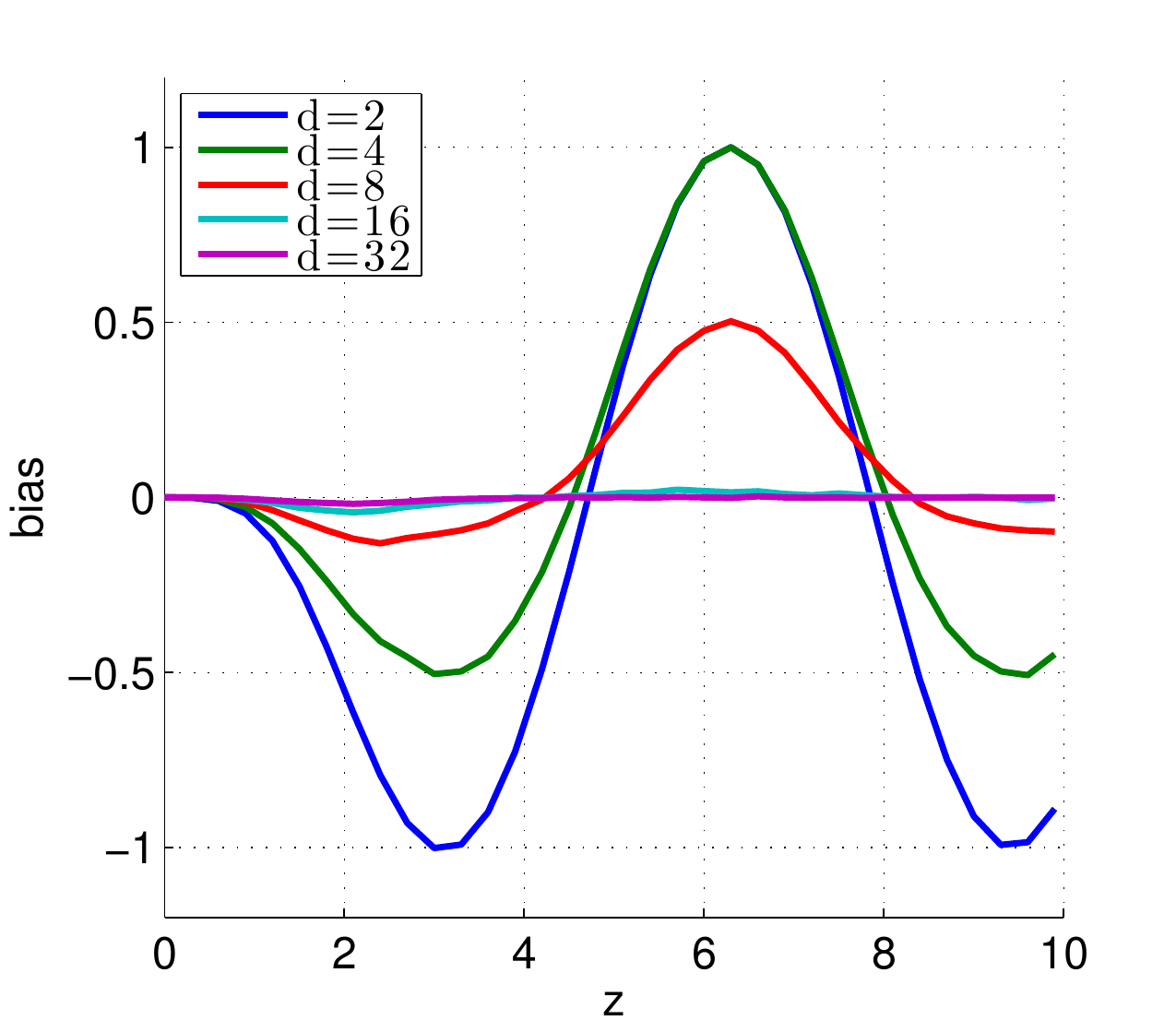}}
\hspace{-0.27cm}	
\subfigure[Variance ratio of ORF\textprime]{\includegraphics[width=0.26\textwidth]{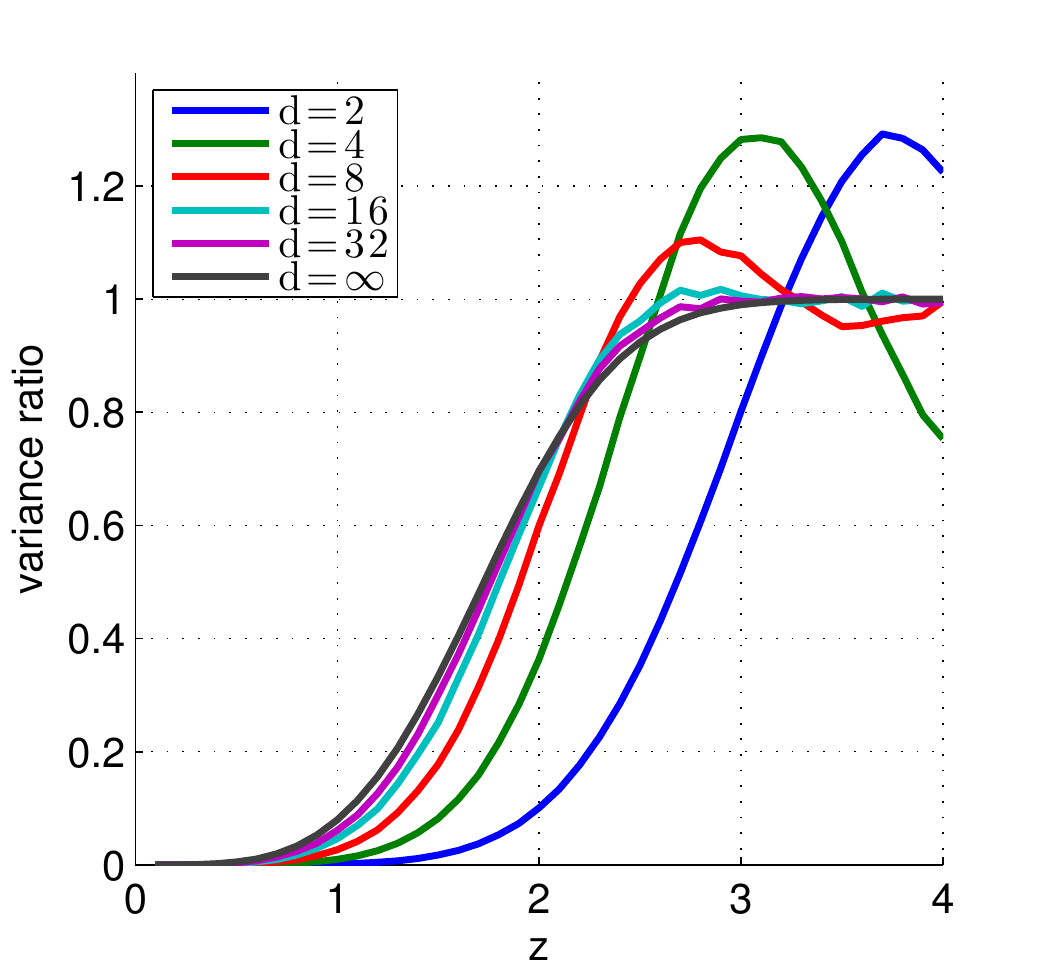}}
\hspace{-0.27cm}
\subfigure[Variance ratio of SORF]
{\includegraphics[width=0.26\textwidth]{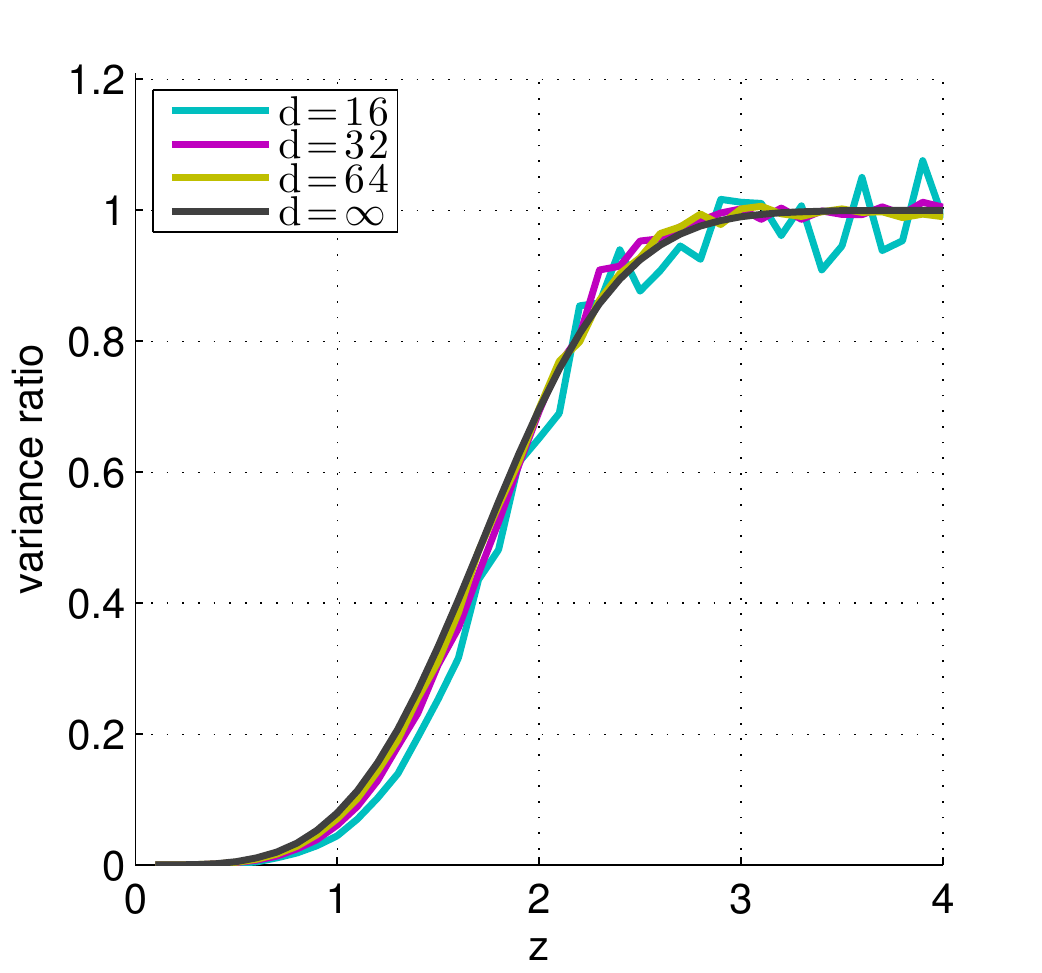}}
\hspace{-0.27cm}
\caption{Simulations of bias and variance of ORF\textprime and SORF. $z = ||\bx - \by|| / \sigma$. 
(a) $\mathbb{E}(K_{\ORF^\prime}(\bx, \by)) - e^{-z^2 / 2}$. 
(b) $\mathbb{E}(K_{\SORF}(\bx, \by)) - e^{-z^2 / 2}$. 
(c) $\Var(K_{\ORF^\prime}(\bx, \by)) / \Var(K_{\text{RFF}}(\bx, \by))$.
(d) $\Var(K_{\SORF}(\bx, \by)) / \Var(K_{\text{RFF}}(\bx, \by))$.
Each point on the curve is based on 20,000 choices of the random matrices and two fixed points with distance $z$. For both ORF and ORF\textprime, even at $d=32$, the bias is close to 0 and the variance is close to that of $d = \infty$ (Figure \ref{fig:variance}(a)).}
\label{fig:ORF_simple}
\vspace{-0.3cm}
\end{figure}

Recall that $z = ||\bx - \by|| / \sigma$. This means that ORF preserves the kernel value especially well for data points that are close, thereby retaining the local structure of the dataset.
Furthermore, empirically $\sigma$ is typically not set too small in order to prevent overfitting---a common rule of thumb is to set $\sigma$ to be the average distance of 50th-nearest neighbors in a dataset. In Figure \ref{fig:variance}(c), we plot the distribution of $z$ for several datasets with this choice of $\sigma$. These distributions are all concentrated in the regime where ORF yields substantial variance reduction.

The above analysis is under the assumption that $D \leq d$. Empirically, for RFF, $D$ needs to be larger than $d$ in order to achieve low approximation error. In that case, we independently generate and apply the transformation (\ref{eq:ORF}) multiple times. The next lemma bounds the variance for this case.
\begin{corollary}
Let $D = m \cdot d$, for an integer $m$ and $z = ||\bx - \by|| / \sigma$. There exists a function $f$ such that for all $z$, the variance of $K_{\text{ORF}}(\bx, \by)$ is bounded by
\[
\Var\left(K_{\text{ORF}}(\bx, \by) \right) \leq \frac{1}{2D} \left(\left(1-e^{-z^2}\right)^2 - \frac{d-1}{d}  e^{-z^2} z^4 \right) + \frac{f(z)}{dD}.
\]
\end{corollary}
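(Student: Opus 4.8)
The plan is to reduce the rectangular case to $m$ independent copies of the square ($D=d$) construction already controlled by Theorem~1. Write $K_{\ORF}(\bx,\by) = \frac{1}{D}\sum_{i=1}^{D} a_i$ with $a_i = \cos(\bw_i^T\bz)$, $\bz = (\bx-\by)/\sigma$, $z = \|\bz\|$, and partition $\{1,\dots,D\}$ into $m$ blocks of size $d$, one per independently drawn pair $(\bS_k,\bQ_k)$. Inside a block the rows $\bw_i = s_i\bu_i$ share a single random orthonormal basis with $\chi_d$ scalings --- exactly the joint law used in Theorem~1 and Lemma~2, now with the block dimension $d$ as the relevant parameter --- while rows from different blocks are independent. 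Unbiasedness is then immediate: each $\bw_i\sim N(0,\mathbf{I}_d)$ gives $\E[a_i]=e^{-z^2/2}$, and an average of unbiased estimators is unbiased.

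For the variance I would expand
\[
\Var\!\left(\frac{1}{D}\sum_i a_i\right) = \frac{1}{D^2}\sum_i \Var(a_i) + \frac{1}{D^2}\sum_{i\neq j}\mathrm{Cov}(a_i,a_j),
\]
and split the covariance sum according to whether $i$ and $j$ belong to the same block. As in Theorem~1, $\Var(a_i) = \tfrac12(1-e^{-z^2})^2$ for every $i$, so the diagonal contributes the RFF term $\tfrac{1}{2D}(1-e^{-z^2})^2$. Pairs from different blocks contribute $0$ by independence. For each of the $m\,d(d-1)$ ordered same-block pairs, Lemma~2 (with parameter $d$) gives $\mathrm{Cov}(a_i,a_j)=\E[a_ia_j]-e^{-z^2}\le -e^{-z^2}\tfrac{z^4}{2d}+\tfrac{f(z)}{d^2}$. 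Collecting these and simplifying the combinatorial prefactors with $D=md$ --- in particular $\tfrac{m(d-1)}{D^2}=\tfrac{d-1}{dD}$ --- turns the negative part into $-\tfrac{1}{2D}\tfrac{d-1}{d}e^{-z^2}z^4$ and bounds the remainder by $\tfrac{f(z)}{dD}$, which is the claimed inequality.

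There is no genuine obstacle here once Theorem~1 and Lemma~2 are in hand; it is a counting argument. The only point worth care is that the orthogonality-induced anti-correlation lives \emph{within} each $d$-block, so what enters Lemma~2 is the block size $d$ and the count $m\,d(d-1)$ of same-block pairs, rather than $d$ being replaced by $D$ and all $D(D-1)$ pairs being used --- this is exactly why $\tfrac{D-1}{d}$ in Theorem~1 becomes $\tfrac{d-1}{d}$ here, and why the variance reduction over RFF no longer improves as $D$ grows beyond $d$. (If $D$ is not a multiple of $d$, the same decomposition into $\lfloor D/d\rfloor$ full blocks plus one shorter block, the latter bounded directly by Theorem~1, yields an analogous estimate.)
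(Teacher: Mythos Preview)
Your proposal is correct and matches the approach the paper intends: the corollary is stated without an explicit proof, but the surrounding text makes clear that for $D=md$ one concatenates $m$ independently generated $d\times d$ ORF blocks, and your argument---partition into blocks, zero covariance across blocks by independence, and apply Lemma~\ref{lem:neg_cor} to the $m\,d(d-1)$ same-block ordered pairs---is exactly the computation that recovers the stated bound. Your bookkeeping of the prefactors (in particular $\tfrac{m\,d(d-1)}{D^2}\cdot\tfrac{1}{2d}=\tfrac{d-1}{2dD}$ for the main negative term and $\tfrac{m\,d(d-1)}{D^2}\cdot\tfrac{1}{d^2}\le\tfrac{1}{dD}$ for the remainder) is right.
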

\section{Structured Orthogonal Random Features}
\label{sec:SORF}

In the previous section, we presented Orthogonal Random Features (ORF) and provided a theoretical explanation for their effectiveness. Since generating orthogonal matrices in high dimensions can be expensive, here we propose a fast version of ORF by imposing structure on the orthogonal matrices. This method can provide drastic memory and time savings with minimal compromise on kernel approximation quality. Note that the previous works on fast kernel approximation using structured matrices do not use structured \textit{orthogonal} matrices \cite{le2013fastfood, arxiv_cnm, choromanski2016recycling}.

Let us first introduce a simplified version of ORF: replace $\mathbf{S}$ in (\ref{eq:ORF}) by a scalar $\sqrt{d}$. Let us call this method ORF\textprime. The transformation matrix thus has the following form:
\begin{equation}
	\bW_{\ORF^\prime} = \frac{\sqrt{d}}{\sigma} \mathbf{Q}.
	\label{eq:ORF_simple}
\end{equation}

\begin{theorem}(Appendix~\ref{app:orfd})
\label{thm:orfd}
Let $K_{\ORF^\prime}(\bx, \by)$ be the approximate kernel computed with linear transformation matrix (\ref{eq:ORF_simple}).
Let $D \leq d$ and $z = ||\bx - \by||/\sigma$. There exists a function $f$ such that the bias of $K_{\ORF^\prime}(\bx,\by)$ satisfies
\[
\vspace{-0.4cm}
\left \lvert \mathbb{E}(K_{\ORF^\prime}(\bx, \by)) -  e^{-z^2/2}\right \rvert \leq e^{-z^2/2} \frac{z^4}{4d} +  \frac{f(z)}{d^2},
\] 
and the variance satisfies
\[
\Var\left(K_{\ORF^\prime}(\bx, \by) \right) \leq \frac{1}{2D} \left((1-e^{-z^2})^2 - \frac{D-1}{d}  e^{-z^2} z^4 \right) + \frac{f(z)}{d^2}.
\]
\vspace{-0.2cm}
\end{theorem}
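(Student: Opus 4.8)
The plan is to route everything through the Fourier transform of the uniform distribution on the sphere. Write $\bz=(\bx-\by)/\sigma$ and $z=\norm{\bz}$, so $K_{\ORF^\prime}(\bx,\by)=\frac1D\sum_{i=1}^D a_i$ with $a_i=\cos(\bw_i^T\bz)$, $\bw_i=\sqrt d\,\bu_i$, and $\bu_1,\dots,\bu_D$ the first $D$ columns of the uniformly random orthogonal matrix $\bQ$ of (\ref{eq:ORF_simple}). Two facts drive the argument. (i) For a single uniformly random unit vector $\bu\in\bR^d$ and any $\bv$, $\E[\cos(t\,\bu^T\bv)]$ depends only on $r:=t\norm{\bv}$; call this value $\psi_d(r)$, and recall (Bochner's identity on the sphere; equivalently a normalized Bessel function)
\[
\psi_d(r)=\sum_{k=0}^{\infty}\frac{(-1)^k (r^2/4)^k}{(d/2)_k\,k!},\qquad (d/2)_k=\frac{1}{2^k}\prod_{j=0}^{k-1}(d+2j).
\]
(ii) Because $\bQ$ is rotation-invariant and $\bu_1\perp\bu_2$, each of $\bu_1$, $(\bu_1+\bu_2)/\sqrt2$, $(\bu_1-\bu_2)/\sqrt2$ is again a uniformly random unit vector in $\bR^d$; this plays the role of the independence used in the RFF and ORF analyses.

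The engine is a single $1/d$-expansion: the $k$-th term of $\psi_d(\sqrt d\,w)$ equals $\frac{(-1)^k w^{2k}}{2^k k!}\big(\prod_{j=0}^{k-1}(1+2j/d)\big)^{-1}$, and $\big(\prod_{j=0}^{k-1}(1+2j/d)\big)^{-1}=1-\frac{k(k-1)}{d}+O(1/d^2)$; resumming against the identity $\sum_{k\ge2}\frac{(-1)^k w^{2k}k(k-1)}{2^k k!}=\frac{w^4}{4}e^{-w^2/2}$ yields
\[
\psi_d(\sqrt d\,w)=e^{-w^2/2}-\frac{1}{d}\cdot\frac{w^4}{4}\,e^{-w^2/2}+\frac{\rho_d(w)}{d^2}
\]
for a remainder $\rho_d(w)$ to be bounded. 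Taking $w=z$ (i.e. $t=\sqrt d,\ \bv=\bz$) gives $\E(K_{\ORF^\prime})=\psi_d(\sqrt d\,z)$, hence $|\E(K_{\ORF^\prime})-e^{-z^2/2}|\le \frac{z^4}{4d}e^{-z^2/2}+|\rho_d(z)|/d^2$, which is the claimed bias bound with $f(z)=\sup_d|\rho_d(z)|$.

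For the variance, use $\Var\big(\frac1D\sum a_i\big)=\frac1D\Var(a_1)+\frac{D-1}{D}\,\mathrm{Cov}(a_1,a_2)$. Product-to-sum gives $\E[a_1^2]=\frac12+\frac12\E[\cos(2\sqrt d\,\bu_1^T\bz)]=\frac12+\frac12\psi_d(\sqrt d\cdot 2z)$, and, via fact (ii), $\E[a_1a_2]=\frac12\E[\cos(\sqrt d(\bu_1+\bu_2)^T\bz)]+\frac12\E[\cos(\sqrt d(\bu_1-\bu_2)^T\bz)]=\psi_d(\sqrt d\cdot\sqrt2 z)$. Plugging $w\in\{z,\sqrt2 z,2z\}$ into the expansion of $\psi_d(\sqrt d\,w)$ (and squaring the $w=z$ case for $(\E[a_1])^2$), the $O(1)$ terms collapse to $\frac{1}{2D}(1-e^{-z^2})^2$, which is exactly $\Var(K_{\RFF})$ of Lemma~\ref{lem:RFF}; the covariance term contributes, at order $1/d$, $\frac{D-1}{D}\big(\E[a_1a_2]-(\E[a_1])^2\big)=-\frac{D-1}{2Dd}e^{-z^2}z^4+O(1/d^2)$, the second term in the claimed bound; and the remaining $1/d$ and $1/d^2$ contributions (the former appearing divided by $D$) are collected into $f(z)/d^2$.

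The only genuine obstacle is controlling $\rho_d$ uniformly in $d$, so that a single function $f$ works. For that I would establish $1\le\prod_{j=0}^{k-1}(1+2j/d)\le\exp(k(k-1)/d)$ together with a quantitative second-order bound $\big|\big(\prod_{j<k}(1+2j/d)\big)^{-1}-1+\frac{k(k-1)}{d}\big|\le \frac{C\,k^4}{d^2}$ for all $d\ge 1$; then the series defining $\rho_d(w)$ is dominated term-by-term by $C\sum_k\frac{w^{2k}k^4}{2^k k!}$, which converges to $e^{w^2/2}$ times a degree-$4$ polynomial in $w^2$, giving a $d$-free majorant. The same domination is rerun at $w=\sqrt2 z,2z$, and squaring the already-proved bias expansion handles $(\E[a_1])^2$; everything else is bookkeeping. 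A less computational route to the leading terms — though not obviously to the uniform remainder — writes $\bu=\bg/\norm{\bg}$ with $\bg\sim N(0,\mathbf{I}_d)$, so $\sqrt d\,\bu^T\bz=\bg^T\bz/\sqrt{\norm{\bg}^2/d}$, and expands in the $O(1/\sqrt d)$ fluctuation of $\norm{\bg}^2/d$ around $1$.
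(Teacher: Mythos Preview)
Your proposal is correct and follows essentially the same route as the paper: Taylor-expand the spherical characteristic function, expand $\bigl(\prod_{j<k}(1+2j/d)\bigr)^{-1}=1-\tfrac{k(k-1)}{d}+O(k^4/d^2)$ (the paper packages this as its Lemma~\ref{lem:tech1}), and resum. Your function $\psi_d$ and the paper's Gaussian-ratio computation $y(i)=x(i)/\norm{\bx}$ produce the identical series term by term; the remainder control you sketch, $\bigl|\ldots\bigr|\le Ck^4/d^2$ dominated by $\sum_k \tfrac{w^{2k}k^4}{2^k k!}$, is exactly how the paper obtains its explicit $e^{z^2/2}$-times-polynomial bound.

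One genuine simplification you introduce is worth flagging: for the covariance you use that $(\bu_1\pm\bu_2)/\sqrt{2}$ is again Haar-uniform on the sphere, so $\E[a_1a_2]=\psi_d(\sqrt{d}\cdot\sqrt{2}\,z)$ is a \emph{single} reuse of the bias computation at a rescaled argument. The paper instead (following its Lemma~\ref{lem:neg_cor}) expands $\cos((\bw_1+\bw_2)^T\bz)$ directly and evaluates mixed moments $\E[y(1)^{2i}y(2)^{2k-2i}]$ via the Gaussian-ratio trick and a binomial sum, which is heavier bookkeeping. Your shortcut is cleaner and gives the same leading term $e^{-z^2}-\tfrac{z^4}{d}e^{-z^2}$; what the paper's longer calculation buys is only a slightly more explicit constant in the $O(1/d^2)$ remainder.

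A small caveat that applies equally to your writeup and the paper's: the $O(1/d)$ piece of $\Var(a_1)$, after division by $D$, is $O(1/(Dd))$, which for general $D\le d$ is not literally $O(1/d^2)$. Both you and the paper sweep this into the final $f(z)/d^2$ without comment; if you want to be precise, either state the remainder as $f(z)/(Dd)$ or note that the $1/d$ coefficient in $\Var(a_1)$ can be bounded and partially absorbed.
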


\vspace{-0.2cm}
The above implies that when $d$ is large $K_{\ORF^\prime}(\bx, \by)$ is a good estimation of the kernel with low variance.
Figure \ref{fig:ORF_simple}(a) shows that even for  relatively small $d$, the estimation is almost unbiased. Figure \ref{fig:ORF_simple}(c) shows that when $d \geq 32$, the variance ratio is very close to that of $d = \infty$. We find empirically that ORF\textprime also provides very similar MSE in comparison with ORF in real-world datasets. 

\begin{figure}[t]
\centering
\subfigure[\texttt{LETTER} ($d = 16$)]{\includegraphics[width=0.33\textwidth]{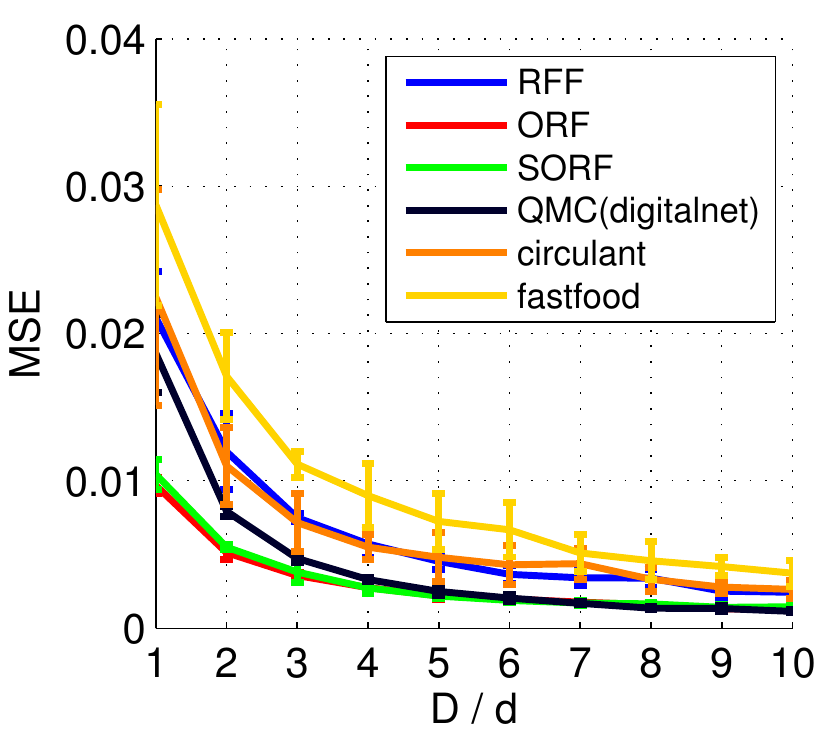}}
\hspace{-0.23cm}
\subfigure[\texttt{FOREST} ($d = 64$)]{\includegraphics[width=0.33\textwidth]{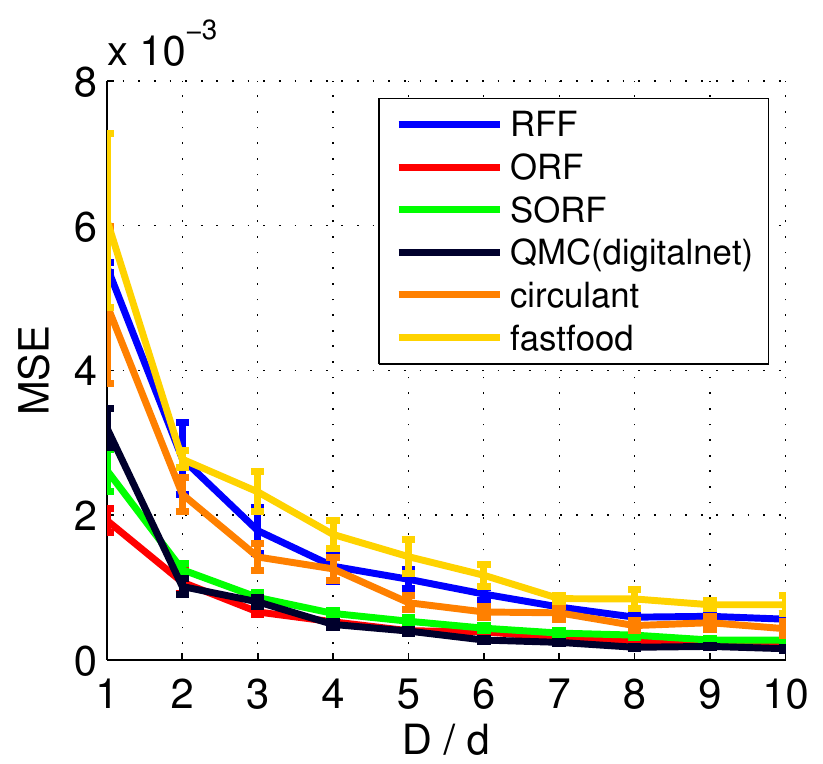}}
\hspace{-0.23cm}
\subfigure[\texttt{USPS} ($d = 256$)]{\includegraphics[width=0.33\textwidth]{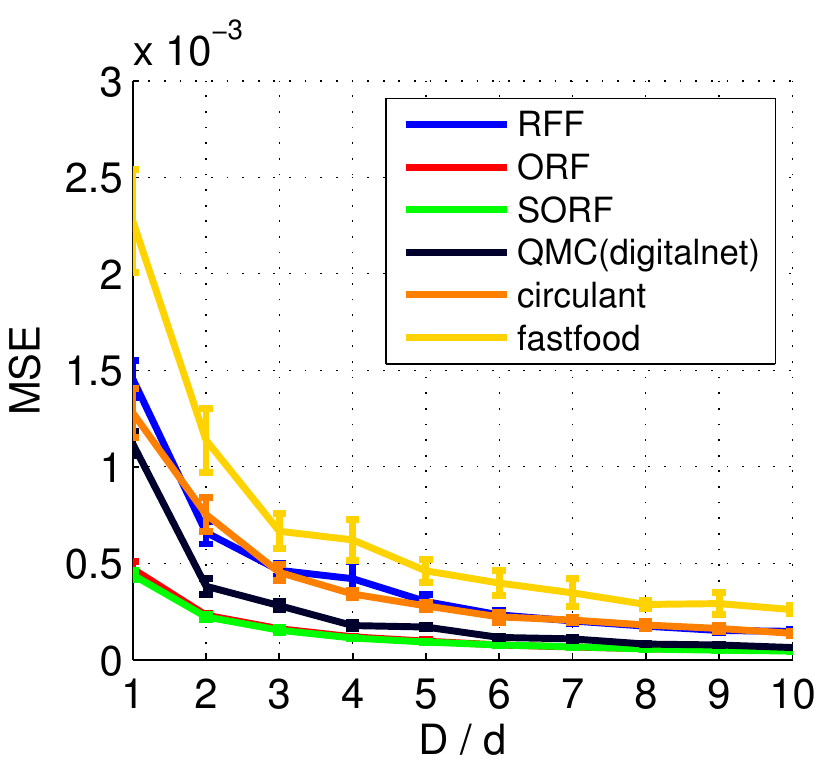}}
\subfigure[\texttt{CIFAR} ($d = 512$)]{\includegraphics[width=0.33\textwidth]{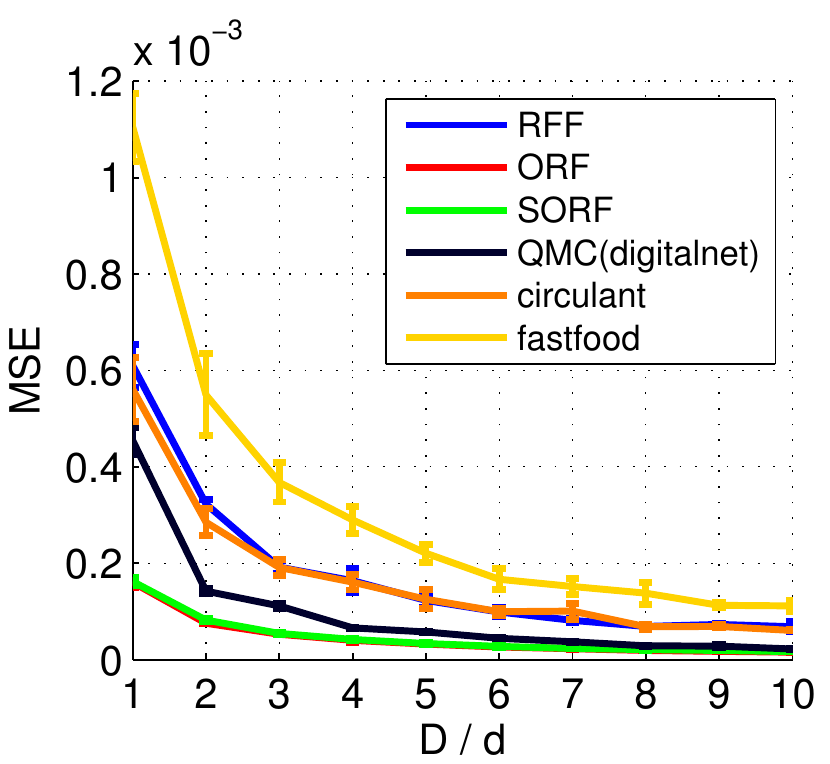}}
\hspace{-0.23cm}
\subfigure[\texttt{MNIST} ($d = 1024$)]{\includegraphics[width=0.33\textwidth]{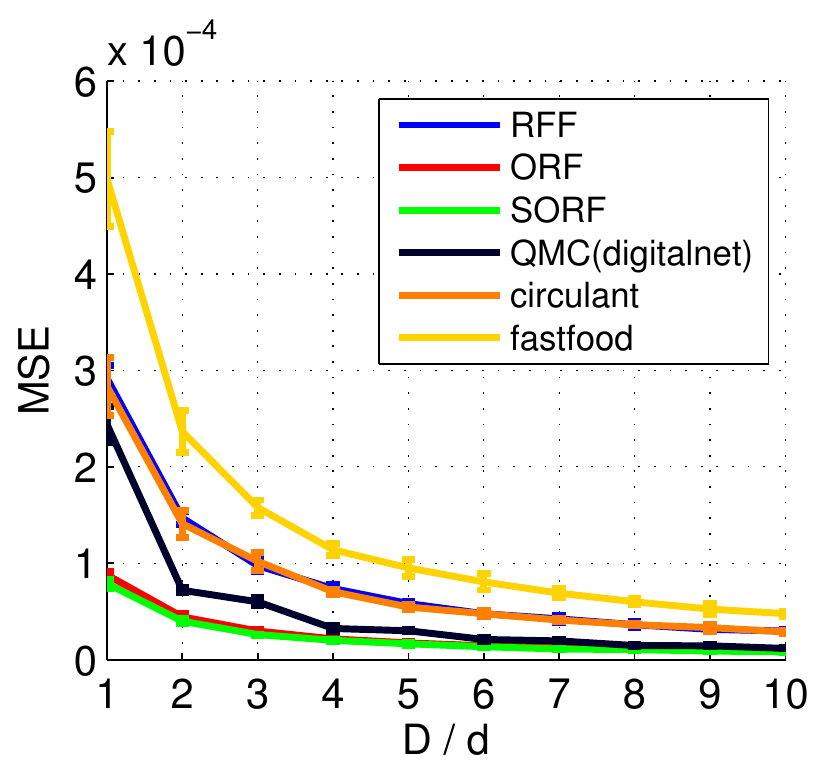}}
\hspace{-0.23cm}
\subfigure[\texttt{GISETTE} ($d = 4096$)]{\includegraphics[width=0.33\textwidth]{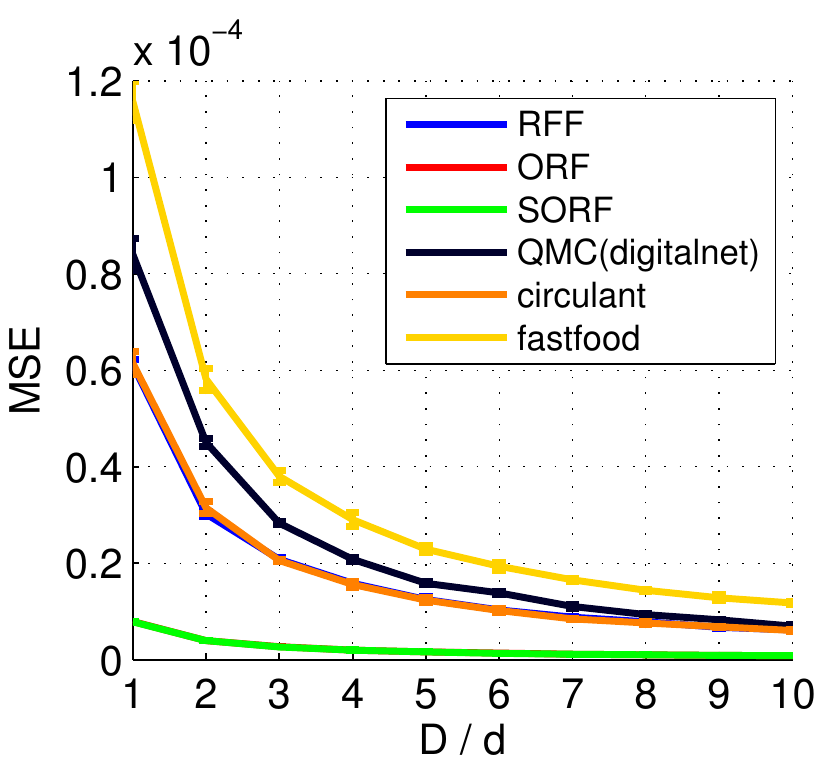}}
\vspace{-0.2cm}
\caption{Kernel approximation mean squared error (MSE) for the Gaussian kernel $K(\bx, \by) = e^{-||\bx - \by ||^2 / 2\sigma^2}$. $D$: number of transformations. $d$: input feature dimension.  For each dataset, $\sigma$ is chosen to be the mean distance of the 50th $\ell_2$ nearest neighbor for 1,000 sampled datapoints. Empirically, this yields good classification results. The curves for SORF and ORF overlap.}
\label{fig:mse_all}
\vspace{-0.4cm}
\end{figure}

We now introduce Structured Orthogonal Random Features (SORF). It replaces the random orthogonal matrix $\bQ$ of ORF\textprime in (\ref{eq:ORF_simple}) by a special type of structured matrix $\bH \bD_1 \bH \bD_2 \bH \bD_3$:
\vspace{-0.2cm}
\begin{equation}
	\bW_{\text{SORF}} = \frac{\sqrt{d}}{\sigma} \bH \bD_1 \bH \bD_2 \bH \bD_3,
	\label{eq:SORF}
\end{equation}

\vspace{-0.4cm}
where $\bD_i \in \mathbb{R}^{d \times d}, i = 1, 2, 3$ are diagonal ``sign-flipping'' matrices, with each diagonal entry sampled from the Rademacher distribution. $\bH$ is the normalized Walsh-Hadamard matrix.

Computing $\bW_{\text{SORF}} \bx$ has the time cost $\mathcal{O}(d \log d)$, since multiplication with $\bD$ takes $\mathcal{O}(d)$ time and multiplication with $\bH$ takes $\mathcal{O}(d \log d)$ time using fast Hadamard transformation. 
The computation of SORF can also be carried out with almost no extra memory due to the fact that both sign flipping and the Walsh-Hadamard transformation can be efficiently implemented as in-place operations \cite{fino1976unified}.

Figures \ref{fig:ORF_simple}(b)(d) show the bias and variance of SORF. Note that although the curves for small $d$ are different from those of ORF, when $d$ is large ($d > 32$ in practice), the kernel estimation is almost unbiased, and the variance ratio converges to that of ORF.  
In other words, it is clear that SORF can provide almost identical kernel approximation quality as that of ORF. This is also confirmed by the experiments in Section \ref{sec:exp}. In Section \ref{sec:extensions}, we provide theoretical discussions to show  that the structure of  (\ref{eq:SORF}) can also be generally applied to many scenarios where random Gaussian matrices are used.

\section{Experiments}
\label{sec:exp}

\textbf{Kernel Approximation.}
We first show kernel approximation performance on six datasets. The input feature dimension $d$ is set to be power of 2 by padding zeros or subsampling. Figure \ref{fig:mse_all} compares the mean squared error (MSE) of all methods. For fixed $D$, the kernel approximation MSE exhibits the following ordering:
\begin{equation*}
\text{SORF $\simeq$ ORF $<$ QMC \cite{yang2014quasi} $<$ RFF \cite{rahimi2007random} $<$ Other fast kernel approximations \cite{le2013fastfood, arxiv_cnm}}.
\end{equation*}
By imposing orthogonality on the linear transformation matrix, Orthogonal Random Features (ORF) achieves significantly lower approximation error than Random Fourier Features (RFF). The Structured Orthogonal Random Features (SORF) have almost identical MSE to that of ORF. All other fast kernel approximation methods, such as circulant \cite{arxiv_cnm} and FastFood \cite{le2013fastfood} have higher MSE. We also include DigitalNet, the best performing method among Quasi-Monte Carlo techniques \cite{yang2014quasi}. Its MSE is lower than that of RFF, but still higher than that of ORF and SORF. The order of time cost for a fixed $D$ is
\begin{equation*}
\text{SORF $\simeq$ Other fast kernel approximations \cite{le2013fastfood, arxiv_cnm} $\ll$ ORF $=$ QMC \cite{yang2014quasi} $=$ RFF \cite{rahimi2007random}}.
\end{equation*}
Remarkably, SORF has  both better computational efficiency and higher kernel approximation quality
compared to other methods.

\begin{table}[t]
\vspace{-0.1cm}
{\small
\hfill{}
\scalebox{0.9}{
\begin{tabular}{l||l||c|c|c|c|c||c}
\hline
\textbf{Dataset}&\textbf{Method}& $D = 2d$ &  $D = 4d$ &  $D = 6d$ &  $D = 8d$ &  $D = 10d$ &  Exact\\
\hline
\hline
\multirow{3}{*}{\parbox{1cm}{\texttt{letter} $d$ = 16}}
& RFF  & 76.44 $\pm$ 1.04 & 81.61 $\pm$ 0.46 &  \textbf{85.46 $\pm$ 0.56} & 86.58 $\pm$ 0.99 &  \textbf{87.84 $\pm$ 0.59} & \multirow{3}{*}{\parbox{1cm}{\texttt{90.10}}}\\
& ORF  &  \textbf{77.49 $\pm$ 0.95} &  \textbf{82.49 $\pm$ 1.16} & 85.41 $\pm$ 0.60 &  \textbf{87.17 $\pm$ 0.40} & 87.73 $\pm$ 0.63 \\
& SORF  & 76.18 $\pm$ 1.20 & 81.63 $\pm$ 0.77 & 84.43 $\pm$ 0.92 & 85.71 $\pm$ 0.52 & 86.78 $\pm$ 0.53 \\
\hline

\multirow{3}{*}{\parbox{1cm}{\texttt{forest} $d$ = 64}}
& RFF  & 77.61 $\pm$ 0.23 &  \textbf{78.92 $\pm$ 0.30} & 79.29 $\pm$ 0.24 & 79.57 $\pm$ 0.21 &  \textbf{79.85 $\pm$ 0.10} & \multirow{3}{*}{\parbox{1cm}{\texttt{80.43}}}\\
& ORF  &  \textbf{77.88 $\pm$ 0.24} & 78.71 $\pm$ 0.19 &  \textbf{79.38 $\pm$ 0.19} &  \textbf{79.63 $\pm$ 0.21} & 79.54 $\pm$ 0.15 \\
& SORF  & 77.64 $\pm$ 0.20 & 78.88 $\pm$ 0.14 & 79.31 $\pm$ 0.12 & 79.50 $\pm$ 0.14 & 79.56 $\pm$ 0.09 \\
\hline

\multirow{3}{*}{\parbox{1cm}{\texttt{usps} \\ $d$ = 256}}
& RFF  & 94.27 $\pm$ 0.38 & 94.98 $\pm$ 0.10 & 95.43 $\pm$ 0.22 &  \textbf{95.66 $\pm$ 0.25} & 95.71 $\pm$ 0.18 & \multirow{3}{*}{\parbox{1cm}{\texttt{95.57}}}\\
& ORF  & 94.21 $\pm$ 0.51 &  \textbf{95.26 $\pm$ 0.25} &  \textbf{96.46 $\pm$ 0.18} & 95.52 $\pm$ 0.20 &  \textbf{95.76 $\pm$ 0.17} \\
& SORF  &  \textbf{94.45 $\pm$ 0.39} & 95.20 $\pm$ 0.43 & 95.51 $\pm$ 0.34 & 95.46 $\pm$ 0.34 & 95.67 $\pm$ 0.15 \\
\hline

\multirow{3}{*}{\parbox{1cm}{\texttt{cifar} \\ $d$ = 512}}
& RFF  & 73.19 $\pm$ 0.23 & 75.06 $\pm$ 0.33 & 75.85 $\pm$ 0.30 & 76.28 $\pm$ 0.30 & 76.54 $\pm$ 0.31 & \multirow{3}{*}{\parbox{1cm}{\texttt{78.71}}}\\
& ORF  &  \textbf{73.59 $\pm$ 0.44} & 75.06 $\pm$ 0.28 &  \textbf{76.00 $\pm$ 0.26} & 76.29 $\pm$ 0.26 &  \textbf{76.69 $\pm$ 0.09} \\
& SORF  & 73.54 $\pm$ 0.26 &  \textbf{75.11 $\pm$ 0.21} & 75.76 $\pm$ 0.21 &  \textbf{76.48 $\pm$ 0.24} & 76.47 $\pm$ 0.28 \\
\hline

\multirow{3}{*}{\parbox{1.1cm}{\texttt{mnist} \\ $d$ = 1024}}
& RFF  & 94.83 $\pm$ 0.13 & 95.48 $\pm$ 0.10 & 95.85 $\pm$ 0.07 &  \textbf{96.02 $\pm$ 0.06} & 95.98 $\pm$ 0.05 & \multirow{3}{*}{\parbox{1cm}{\texttt{97.14}}}\\
& ORF  & 94.95 $\pm$ 0.25 &  \textbf{95.64 $\pm$ 0.06} &  \textbf{95.85 $\pm$ 0.09} & 95.95 $\pm$ 0.08 &  \textbf{96.06 $\pm$ 0.07} \\
& SORF  &  \textbf{94.98 $\pm$ 0.18} & 95.48 $\pm$ 0.08 & 95.77 $\pm$ 0.09 & 95.98 $\pm$ 0.05 & 96.02 $\pm$ 0.07 \\
\hline

\multirow{3}{*}{\parbox{1.2cm}{\texttt{gisette} \\ $d$ = 4096}}
& RFF  &  \textbf{97.68 $\pm$ 0.28} &  \textbf{97.74 $\pm$ 0.11} & 97.66 $\pm$ 0.25 &  \textbf{97.70 $\pm$ 0.16} &  \textbf{97.74 $\pm$ 0.05} & \multirow{3}{*}{\parbox{1cm}{\texttt{97.60}}}\\
& ORF  & 97.56 $\pm$ 0.17 & 97.72 $\pm$ 0.15 &  \textbf{97.80 $\pm$ 0.07} & 97.64 $\pm$ 0.09 & 97.68 $\pm$ 0.04 \\
& SORF  & 97.64 $\pm$ 0.17 & 97.62 $\pm$ 0.04 & 97.64 $\pm$ 0.11 & 97.68 $\pm$ 0.08 & 97.70 $\pm$ 0.14 \\
\hline
\hline
\end{tabular}}}
\hfill{}
\caption{Classification Accuracy based on SVM. ORF and SORF provide competitive classification accuracy for a given $D$. Exact is based on kernel-SVM trained on the Gaussian kernel. 
Note that in all the settings SORF is faster than RFF and ORF by a factor of $\mathcal{O}(d / \log d)$. For example, on \texttt{gisette} with $D = 2d$, SORF provides 10 times speedup in comparison with RFF and ORF.}
\label{table:acc}
\vspace{-0.4cm}
\end{table}

We also apply ORF and SORF on classification tasks. Table \ref{table:acc} shows classification accuracy for different kernel approximation techniques with a (linear) SVM classifier. SORF is competitive with or better than RFF, and has greatly reduced time and space costs.

\textbf{The Role of $\sigma$.}
Note that a very small $\sigma$ will lead to overfitting, and a very large $\sigma$ provides no discriminative power for classification. 
Throughout the experiments, $\sigma$ for each dataset is chosen to be the mean distance of the 50th $\ell_2$ nearest neighbor, which empirically yields good classification results \cite{arxiv_cnm}. 
As shown in Section \ref{sec:ORF}, the relative improvement over RFF is positively correlated with $\sigma$. Figure \ref{fig:sigma}(a)(b) verify this on the \texttt{mnist} dataset. Notice that the proposed methods (ORF and SORF) consistently improve over RFF. 

\textbf{Simplifying SORF.}
The SORF transformation consists of three Hadamard-Diagonal blocks. A natural question is whether using fewer computations and randomness can achieve similar empirical performance. 
Figure \ref{fig:sigma}(c) shows that reducing the number of  blocks to two (HDHD) provides similar performance, while reducing to one block (HD) leads to large error. 

\begin{figure}[t]
\centering\subfigure[$\sigma = 0.5 \times$ 50NN distance]{\includegraphics[width=0.33\textwidth]{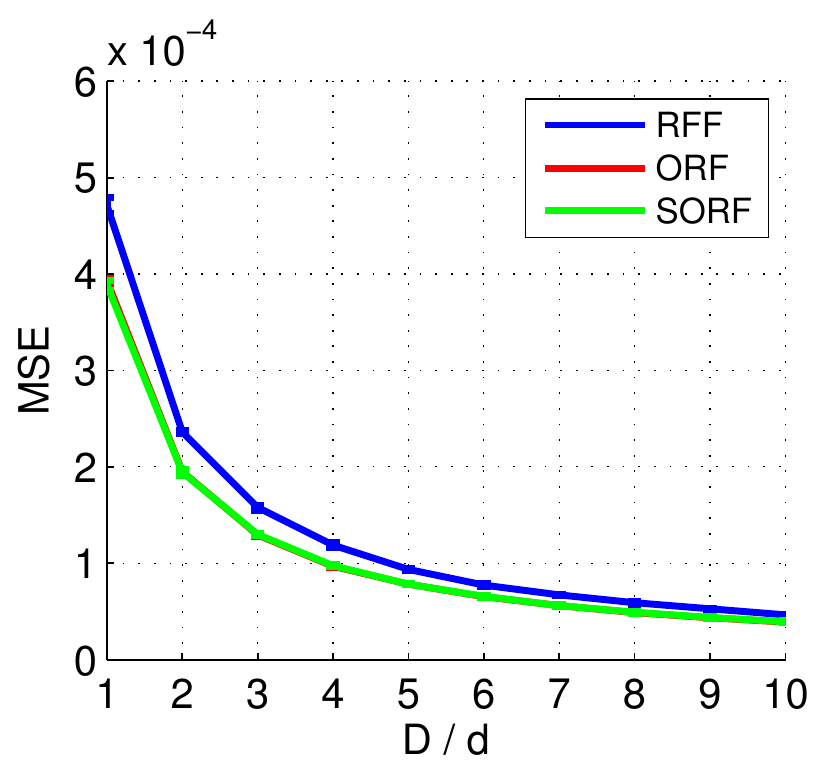}}
\hspace{-0.23cm}
\subfigure[$\sigma =2 \times$ 50NN distance]{\includegraphics[width=0.33\textwidth]{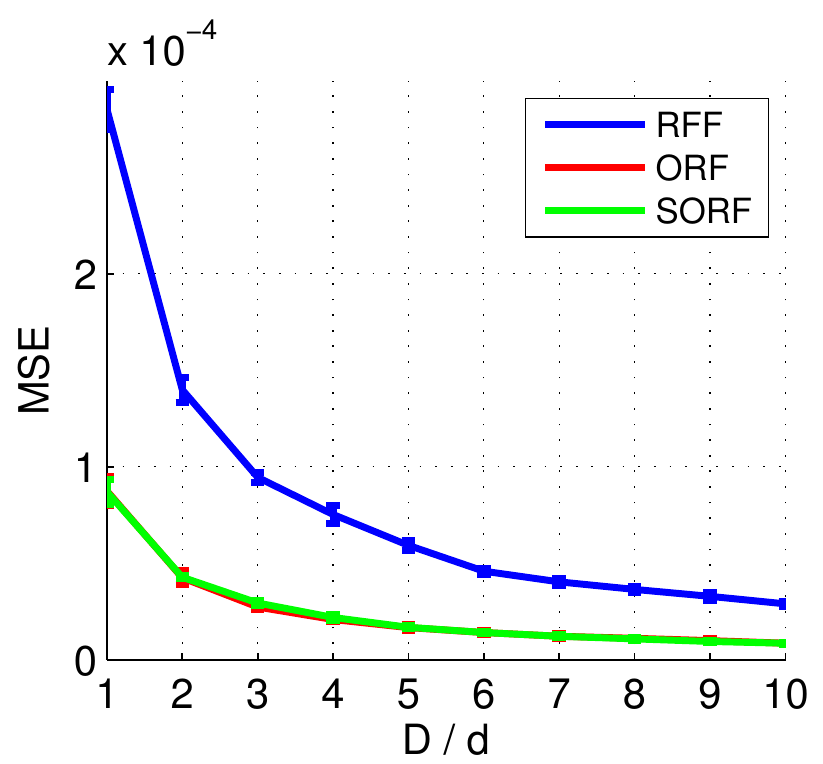}}
\hspace{-0.23cm}
\subfigure[Variants of SORF]{\includegraphics[width=0.33\textwidth]{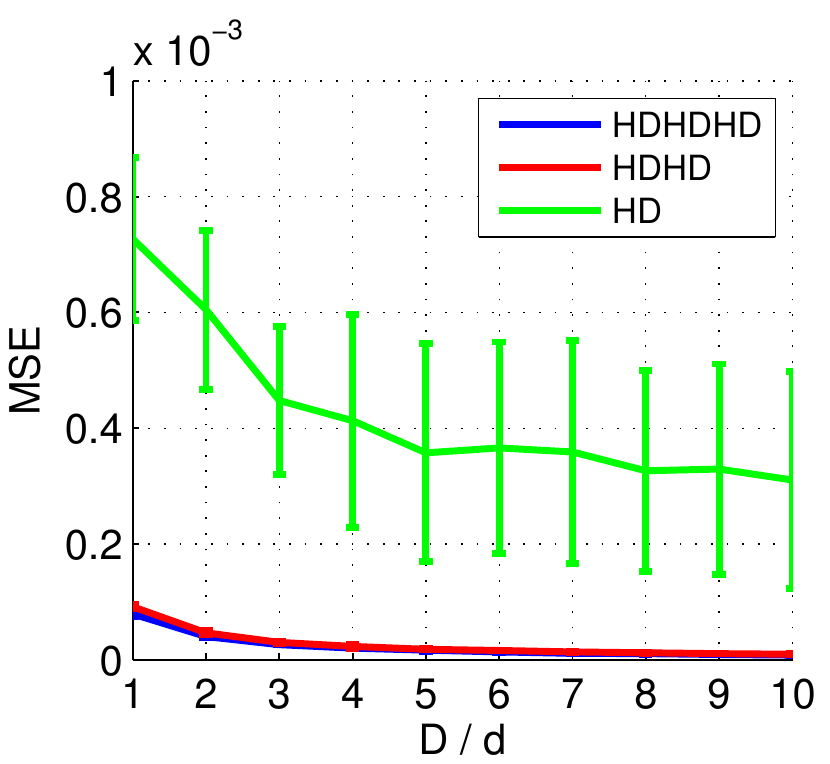}}
\caption{(a) (b) MSE on \texttt{mnist} with different $\sigma$. (c) Effect of using less randomness on \texttt{mnist}. HDHDHD is the the proposed SORF method. HDHD reduces the number of Hadamard-Diagonal blocks to two, and HD uses only one such block.}
\vspace{-0.3cm}
\label{fig:sigma}
\end{figure}

\section{Analysis and General Applicability of the Hadamard-Diagonal Structure}
\label{sec:extensions}

We provide theoretical discussions of SORF in this section. 
We first show that for large $d$, SORF is an unbiased estimator of the Gaussian kernel. 
\begin{theorem} (Appendix \ref{app:thm:biassorf})
\label{thm:biassorf}
Let $K_{\SORF}(\bx, \by)$ be the approximate kernel computed with linear transformation matrix $\sqrt{d} \bH \bD_1 \bH \bD_2 \bH \bD_3$.
Let $z = ||\bx - \by||/\sigma$. Then 
\[
\left \lvert \mathbb{E}(K_{\SORF}(\bx, \by)) -  e^{-z^2/2}\right \rvert \leq  \frac{6z}{\sqrt{d}}.
\] 
\end{theorem}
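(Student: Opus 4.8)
The plan is to collapse the bias to a one-dimensional problem using the row-symmetry of the Hadamard–diagonal product, and then to obtain the $z/\sqrt d$ rate by separating a ``geometric'' Gaussian-approximation estimate (which carries no factor of $z$) from the Lipschitz constant of the test function $\cos$ (which carries the factor $z$). Concretely, let $\bz = (\bx-\by)/\sigma$, $z=\norm{\bz}$, and write $\bw_i^T(\bx-\by) = \mathbf r_i^T\bD_1\bH\bD_2\bH\bD_3\bz$, where $\mathbf r_i := \sqrt d\cdot(i\text{th row of }\bH)\in\{\pm1\}^d$ is fixed. Since $(\mathbf r_i)_j(\bD_1)_{jj}$ is, for each fixed $i$, an i.i.d.\ Rademacher vector independent of $\bD_2,\bD_3$, every summand of $K_{\SORF}=\frac1D\sum_i\cos(\bw_i^T(\bx-\by))$ has the same law, so $\E(K_{\SORF}(\bx,\by))=\E[\cos S]$ with $S := \varepsilon^T\bH\bD_2\bH\bD_3\bz$ and $\varepsilon$ Rademacher. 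Using $\bH=\bH^T$, $\bH^2=\mathbf I$ and the orthogonality of $\bD_1,\bD_2,\bD_3$ one checks $\E S=0$ and $\E S^2=z^2$.

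Next I would transfer through a Lipschitz test function. For $\xi\sim N(0,1)$ we have $\E[\cos(z\xi)]=e^{-z^2/2}$, hence $|\E(K_{\SORF})-e^{-z^2/2}| = |\E\cos(z\cdot\tfrac Sz)-\E\cos(z\cdot\xi)| \le z\cdot W_1(S/z,\xi)$, where $W_1$ is the Wasserstein-1 distance and $t\mapsto\cos(zt)$ is $z$-Lipschitz. It therefore suffices to prove $W_1(S/z,\xi)\le 6/\sqrt d$ (or $O(1/\sqrt d)$, sharpening the constant afterwards).

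Finally I would run a quantitative CLT for $S$. Expanding, $S=\sum_{j,k,l}H_{jk}H_{kl}z_l\,\varepsilon_j(\bD_2)_{kk}(\bD_3)_{ll}$ is a degree-$3$ multilinear form in three independent Rademacher families, i.e.\ a Rademacher chaos of fixed order, which has unit variance after dividing by $z$. Applying Stein's method / the fourth-moment theorem for such chaoses reduces the Wasserstein bound to (i) the excess fourth moment $\E[S^4]-3z^4$, which a direct (if tedious) moment computation shows is $O(z^4/d)$ using the Hadamard normalization $\sum_j H_{jk}^2=1$, and (ii) the maximal coordinate influence: each $\varepsilon_j$- and each $(\bD_2)_{kk}$-influence equals exactly $z^2/d$, but the $(\bD_3)_{ll}$-influence equals $z_l^2$, which is not uniformly small when $\bz$ is spiky. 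To handle (ii) I would condition on $\bD_3$ first, so that $S\mid\bD_3=\varepsilon^T\bH\bD_2\mathbf c$ with $\mathbf c=\bH\bD_3\bz$, whose entries have size $\approx z/\sqrt d$ for all but an exponentially small fraction of $\bD_3$ (restoring maximal influence $O(z^2\log d/d)$), the rare ``bad'' $\bD_3$ being absorbed by the trivial bound $|\cos(\cdot)|\le1$. Together these give $W_1(S/z,\xi)=O(1/\sqrt d)$, and tuning constants yields $6z/\sqrt d$.

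The step I expect to be the main obstacle is exactly this chaos CLT estimate: controlling the non-uniform influences of the outermost sign matrix $\bD_3$ by exploiting the delocalization produced by the intervening Hadamard layers, and carrying out the fourth-moment computation cleanly enough to reach the explicit constant $6$ without picking up a spurious $\sqrt{\log d}$ factor. A more elementary route — conditioning on $\bD_2,\bD_3$, using $\E_\varepsilon[\cos\sum_j\varepsilon_j a_j]=\prod_j\cos a_j$ with $\mathbf a:=\bH\bD_2\bH\bD_3\bz$ (so $\sum_j a_j^2\equiv z^2$), the one-variable inequalities $|\cos u-e^{-u^2/2}|\le\min(u^2/2,\,Cu^4)$, Cauchy–Schwarz $\sum_j|a_j|^3\le z(\sum_j a_j^4)^{1/2}$, and the fourth-moment bound $\E[\sum_j a_j^4]\le 3z^4/d$ — only delivers $O(z^3/\sqrt d)$, which already matches the claim for $z=O(1)$ and, together with the trivial bound $|\E\cos S-e^{-z^2/2}|\le 1+e^{-z^2/2}$, also for $z=\Omega(\sqrt d)$, but leaves the intermediate regime uncovered; closing that gap is what forces the Stein/Wasserstein argument above.
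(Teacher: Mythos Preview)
Your reduction to a single summand $S=\varepsilon^T\bH\bD_2\bH\bD_3\bz$ and the use of Stein's method are both exactly what the paper does. Where you diverge is in \emph{how much} you condition. The paper conditions on \emph{all} of the inner randomness (in fact it only uses two blocks $\sqrt d\,\bH\bD_1\bH\bD_2$ and sets $\bu=\bH\bD_2\bz$), so that the remaining sum is a \emph{linear} Rademacher sum, not a chaos. It then applies a one-line Stein lemma for bounded $1$-Lipschitz test functions $g$ (here $g=\cos$): for $y'=\sum_j u_j d_j$ and $z'\sim N(0,\|\bu\|_2^2)$,
\[
\bigl|\E g(y')-\E g(z')\bigr|\ \le\ \tfrac32\,\frac{\sum_j |u_j|^3}{\|\bu\|_2^{2}}.
\]
Note the denominator is $\|\bu\|_2^2=z^2$, which is exactly what gives the right $z$-scaling. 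Finally it averages over the inner layer by the crude moment bound $\E|u_j|^3\le(\E u_j^6)^{1/2}\le\sqrt{15}\,(z/\sqrt d)^3$ (Khintchine), so $\tfrac32\sum_j\E|u_j|^3/z^2\le\tfrac32\sqrt{15}\,z/\sqrt d<6z/\sqrt d$. No high-probability delocalization, no fourth-moment theorem, no $\sqrt{\log d}$.

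Your ``elementary route'' is thus much closer to the paper than your main route; the only missing ingredient is replacing the product-formula comparison $\prod_j\cos a_j$ versus $e^{-z^2/2}$ (which indeed over-pays in $z$) by the linear-sum Stein bound above. Your degree-$3$ chaos CLT with high-probability delocalization of $\bH\bD_3\bz$ would also work, but it is strictly harder and, as you correctly flag, courts a spurious logarithm; the paper sidesteps this entirely by taking expectations of moments rather than conditioning on a good event.
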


Even though SORF is nearly-unbiased, proving tight variance and concentration guarantees similar to ORF remains an open question. 
The following discussion provides a sketch in that direction.
We first show a lemma of RFF.
\begin{lemma}
Let $\bW$ be a random Gaussian matrix as in RFF, for a given $\bz$, the distribution of $\bW \bz$ is $N(0, ||z||_2 \mathbf{I}_d)$.
\end{lemma}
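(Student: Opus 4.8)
The plan is to reduce the claim to two elementary facts about Gaussian vectors: a fixed linear functional of a standard Gaussian vector is Gaussian with the obvious mean and variance, and measurable functions of disjoint collections of independent variables are independent. Write $\bW = [\bw_1,\dots,\bw_d]^T$ in terms of its rows, where each $\bw_i \in \mathbb{R}^d$ has i.i.d.\ $N(0,1)$ coordinates and the $\bw_i$ are mutually independent (this is exactly the RFF sampling model, up to the fixed scaling that I keep track of separately). The $i$-th coordinate of $\bW\bz$ is $(\bW\bz)_i = \bw_i^T\bz = \sum_{j=1}^d W_{ij} z_j$, a fixed linear combination of independent standard normals, hence normal; its mean is $\sum_j z_j\,\E[W_{ij}] = 0$ and its variance is $\sum_j z_j^2\,\Var(W_{ij}) = \norm{\bz}_2^2$. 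So each coordinate is marginally $N(0,\norm{\bz}_2^2)$.

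Next I would promote this to a joint statement. Since $(\bW\bz)_i$ depends only on row $\bw_i$ and distinct rows are independent, the coordinates $(\bW\bz)_1,\dots,(\bW\bz)_d$ are mutually independent; a vector with independent $N(0,\norm{\bz}_2^2)$ coordinates is precisely $N(0,\norm{\bz}_2^2 \mathbf{I}_d)$, which is the claim. A cleaner alternative is to invoke the right-orthogonal invariance of $\bW$: for any orthogonal $\bO$, the matrix $\bW\bO$ has the same law as $\bW$, because each row transforms as $\bw_i^T \mapsto (\bO^T\bw_i)^T$ with $\bO^T\bw_i \sim N(0,\mathbf{I}_d)$. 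Choosing $\bO$ so that $\bO^T\bz = \norm{\bz}_2\, e_1$ gives $\bW\bz \stackrel{d}{=} \norm{\bz}_2\,\bW e_1$, and $\bW e_1$ is just the first column of $\bW$, i.e.\ $d$ i.i.d.\ $N(0,1)$ entries; scaling by $\norm{\bz}_2$ yields $N(0,\norm{\bz}_2^2\mathbf{I}_d)$.

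There is no substantive obstacle here; the only points that require a moment's care are (i) arguing \emph{joint} normality rather than merely marginal normality of each coordinate, which the independence-of-rows step settles, and (ii) bookkeeping the variance, which is $\norm{\bz}_2^2$ (equivalently $z^2$ in the paper's notation) — so the displayed $\norm{z}_2$ in the statement should be read as $\norm{z}_2^2$.
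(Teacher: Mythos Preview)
Your proof is correct and, if anything, more explicit than the paper's own justification, which is a one-sentence remark that $\bW\bz$ can be written as $\bR\bg$ with $\bR$ a scaled orthogonal matrix whose rows have norm $\norm{\bz}_2$ and $\bg\sim N(0,\mathbf I_d)$; your rotation-invariance alternative is essentially this same idea, and your row-by-row argument spells out the independence and variance directly. You are also right that the covariance $\norm{z}_2\mathbf I_d$ in the statement is a typo for $\norm{z}_2^2\mathbf I_d$.
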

Note that $\bW \bz$ in RFF can be written as $\bR \bg$, where $\bR$ is a scaled orthogonal matrix such that each row has norm $||z||_2$ and $\bg$ is distributed according to $N(0, \mathbf{I}_d)$. Hence the distribution of $\bR \bg$ is $N(0, ||z||_2 \mathbf{I}_d)$, identical to $\bW \bz$. The concentration results of RFF use the fact that the projections of a Gaussian vector $\bg$ onto orthogonal directions $\bR$ are independent.

We show that $\sqrt{d} \bH \bD_1 \bH \bD_2 \bH \bD_3 \bz $ has similar properties. In particular, we show that it 
can be written as $\tilde\bR \tilde\bg$, where  rows of $\tilde \bR$ are ``near-orthogonal'' (with high probability) and have norm $||\bz ||_2$, and the vector $\tilde \bg$ is close to Gaussian ($\tilde \bg$ has independent sub-Gaussian elements), and hence the projections behave ``near-independently''. 
Specifically, $ \tilde \bg = \text{vec}(\bD_1)$ (vector of diagonal entries of $\bD_1$), and $\tilde{\bR}$ is a function of $\bD_2$, $\bD_3$ and $\bz$.  
\begin{theorem} (Appendix \ref{app:thm:nearortho})
\label{thm:nearortho}
For a given $\bz$, there exists a $\tilde \bR$ (function of $\bD_2, \bD_3, \bz$), such that $\sqrt{d} \bH \bD_1 \bH \bD_2 \bH \bD_3 \bz  = \tilde \bR \text{vec}(\bD_1)$. Each row of $\tilde \bR$ has norm $||\bz||_2$ and for any $t \geq 1/d$, with probability $1-de^{-c\cdot t^{2/3} d^{1/3}}$, the inner product between any two rows of $\tilde\bR$ is at most $t ||z||_2$, where $c$ is a constant.
\end{theorem}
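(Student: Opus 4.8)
The plan is to make the factorization $\sqrt d\,\bH\bD_1\bH\bD_2\bH\bD_3\bz=\tilde\bR\,\text{vec}(\bD_1)$ explicit, reduce the inner product of two rows of $\tilde\bR$ to a low-degree Rademacher chaos by exploiting the Walsh--Hadamard structure, and then finish with a chaos concentration bound together with a union bound over the $O(d)$ distinct row pairs.

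First I would record the factorization. Since $\bD_1$ is diagonal, $\bD_1\bw=\mathrm{diag}(\bw)\,\text{vec}(\bD_1)$ for any $\bw$; taking $\bw=\bH\bD_2\bH\bD_3\bz$ gives
\[
\sqrt d\,\bH\bD_1\bH\bD_2\bH\bD_3\bz=\Big(\sqrt d\,\bH\,\mathrm{diag}(\bH\bD_2\bH\bD_3\bz)\Big)\text{vec}(\bD_1),
\]
so $\tilde\bR:=\sqrt d\,\bH\,\mathrm{diag}(\bH\bD_2\bH\bD_3\bz)$ depends only on $\bD_2,\bD_3,\bz$, as required. As $\bH,\bD_2,\bD_3$ are orthogonal, $\|\bH\bD_2\bH\bD_3\bz\|_2=\|\bz\|_2$, and the $i$-th row of $\tilde\bR$ has entries $\sqrt d\,H_{il}(\bH\bD_2\bH\bD_3\bz)_l$ with $H_{il}^2=1/d$, hence squared norm exactly $\|\bz\|_2^2$.

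Next, for $i\neq k$ I would use $d\,H_{il}H_{kl}=(-1)^{\langle a,l\rangle_2}$ with $a:=i\oplus k\neq 0$ (where $\langle\cdot,\cdot\rangle_2$ is the $\bmod 2$ dot product of bit-strings), so $\langle\tilde\bR_{i,:},\tilde\bR_{k,:}\rangle=\sum_l(-1)^{\langle a,l\rangle_2}(\bH\bD_2\bu)_l^2$ with $\bu:=\bH\bD_3\bz$. Expanding the square and summing over $l$ (again via $\sum_l(-1)^{\langle c,l\rangle_2}=d\,[c=0]$) collapses this, writing $(\bD_2)_m$ for the $m$th diagonal entry, to
\[
\langle\tilde\bR_{i,:},\tilde\bR_{k,:}\rangle=\sum_{m}(\bD_2)_m(\bD_2)_{m\oplus a}\,u_m\,u_{m\oplus a},
\]
which depends on $(i,k)$ only through $a$, so it suffices to control it for each of the $\le d$ nonzero $a$. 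Pairing $m\leftrightarrow m\oplus a$ rewrites it as $2\sum_{p=1}^{d/2}\gamma_p w_p$, where the $\gamma_p:=(\bD_2)_{m_p}(\bD_2)_{m_p\oplus a}$ are i.i.d.\ Rademacher (distinct pairs, independent of $\bu$) and $w_p:=u_{m_p}u_{m_p\oplus a}$.

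Finally I would bound this in two conditional steps. Conditioning on $\bD_3$, $2\sum_p\gamma_p w_p$ is a Rademacher sum, so Hoeffding/Khintchine gives $\Pr[\,|\langle\tilde\bR_{i,:},\tilde\bR_{k,:}\rangle|>t\|\bz\|_2^2\mid \bD_3\,]\le 2\exp\!\big(-t^2\|\bz\|_2^4/(8\sum_p w_p^2)\big)$. The variance proxy $\sum_p w_p^2=\sum_p u_{m_p}^2u_{m_p\oplus a}^2$ is a nonnegative quartic in the Rademacher entries of $\bD_3$; a one-line second-moment computation using $\E u_m^2=\|\bz\|_2^2/d$ shows $\E\sum_p w_p^2=O(\|\bz\|_2^4/d)$, and an appropriate concentration inequality bounds $\Pr[\sum_p w_p^2>\rho]$. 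Combining, $\Pr[|\langle\cdot\rangle|>t\|\bz\|_2^2]\le 2\exp(-t^2\|\bz\|_2^4/(8\rho))+\Pr[\sum_p w_p^2>\rho]$; optimizing over $\rho$ (the crossover being near $\rho\asymp t^{4/3}\|\bz\|_2^4 d^{-1/3}$) yields a tail of order $\exp(-c\,t^{2/3}d^{1/3})$, and a union bound over the $\le d$ values of $a$ gives $1-d\,e^{-c\,t^{2/3}d^{1/3}}$ (the hypothesis $t\ge 1/d$ merely keeps us in the nonvacuous regime). Equivalently, one can observe that $\langle\tilde\bR_{i,:},\tilde\bR_{k,:}\rangle$ is a mean-zero Rademacher chaos of degree at most three (a degree-one form in the $\gamma_p$ with degree-two coefficients in $\bD_3$) with second moment $O(\|\bz\|_2^4/d)$ and invoke a standard chaos tail inequality. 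The hard part is exactly this last step: the factorization, the row-norm computation and the character-sum reduction are routine, but obtaining the precise exponent $t^{2/3}d^{1/3}$ requires the right concentration tool for the quartic $\sum_p w_p^2$ and a careful balancing of the two tails — a crude bound here (e.g.\ the deterministic $\sum_p w_p^2\le\|\bz\|_2^4/4$, or only an $\ell_\infty$ bound on $\bu$) loses either the $d$-dependence or degrades the exponent to $t\sqrt d$.
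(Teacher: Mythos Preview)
Your factorization, row-norm computation, and the character-sum reduction to $\langle\tilde\bR_{i,:},\tilde\bR_{k,:}\rangle=2\sum_p\gamma_p w_p$ are correct and constitute a genuinely different route from the paper. The paper never unpacks the Walsh--Hadamard structure; instead it writes the inner product as the quadratic form $\mathrm{vec}(\bD_2)^T M\,\mathrm{vec}(\bD_2)$ with $M=\mathrm{diag}(\bv)\bH^T\bA\bH\,\mathrm{diag}(\bv)$ (where $\bv=\bH\bD_3\bz$ and $\bA$ is diagonal with entries $\pm 1/\sqrt d$), shows the diagonal of $M$ vanishes so the mean is zero, and applies the Hanson--Wright inequality conditionally on $\bD_3$. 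Your reduction effectively computes $M$ explicitly (it is a direct sum of $d/2$ two-by-two off-diagonal blocks with entries $u_m u_{m\oplus a}$), which lets you replace Hanson--Wright by the more elementary Hoeffding bound; you also get the clean observation that the inner product depends on $(i,k)$ only through $a=i\oplus k$, reducing the union bound to $d-1$ events.

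Where you diverge from the paper is the last step, and here you are over-complicating matters. You dismiss ``only an $\ell_\infty$ bound on $\bu$'' as degrading the exponent to $t\sqrt d$, but this is exactly the tool the paper uses---McDiarmid gives $\Pr[\|\bu\|_\infty>\epsilon\|\bz\|_2]\le 2d e^{-d\epsilon^2/2}$---and it \emph{does} deliver $t^{2/3}d^{1/3}$. The point is that on the event $\|\bu\|_\infty\le\epsilon\|\bz\|_2$ you should bound the variance proxy by $\sum_p w_p^2\le\tfrac{d}{2}\|\bu\|_\infty^4\le\tfrac{d}{2}\epsilon^4\|\bz\|_2^4$, not by $\|\bu\|_\infty^2\|\bu\|_2^2=\epsilon^2\|\bz\|_2^4$ (the former is sharper once $\epsilon<1/\sqrt d$). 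Plugging $\sum_p w_p^2\le\tfrac{d}{2}\epsilon^4\|\bz\|_2^4$ into your Hoeffding bound and balancing $t^2/(d\epsilon^4)$ against $d\epsilon^2$ forces $\epsilon=(t/d)^{1/3}$ and yields the exponent $t^{2/3}d^{1/3}$---precisely your target $\rho\asymp t^{4/3}d^{-1/3}\|\bz\|_2^4$, reached without any separate concentration inequality for the quartic. So your plan works, and in fact finishes with nothing beyond Hoeffding and McDiarmid; the ``hard part'' you flag is not hard once you use the right form of the $\ell_\infty$ estimate.
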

The above result can also be applied to settings not limited to kernel approximation. In the appendix, we show empirically that the same scheme can be successfully applied to angle estimation where the nonlinear map $f$ is a non-smooth $\sign(\cdot)$ function \cite{charikar2002similarity}. We note that the $\bH\bD_1\bH\bD_2\bH\bD_3$ structure has also been recently used in fast cross-polytope LSH \cite{andoni2015practical, kennedy2016fast,choromanski2016triplespin}. 

\vspace{-0.2cm}
\section{Conclusions}
\vspace{-0.2cm}
We have demonstrated that imposing orthogonality on the transformation matrix can greatly reduce the kernel approximation MSE of Random Fourier Features when approximating Gaussian kernels.
We further proposed a type of structured orthogonal matrices with substantially lower computation and memory cost. 
We provided theoretical insights indicating that the Hadamard-Diagonal block structure can be generally used to replace random Gaussian matrices in a broader range of applications. Our method can also be generalized to other types of kernels such as general shift-invariant kernels and polynomial kernels based on Schoenberg's characterization as in \cite{pennington2015spherical}.

\bibliographystyle{abbrv}
\bibliography{thesis}

\begin{appendices}

\section{Variance Reduction via Orthogonal Random Features}
\subsection{Notation}
Let $\bz = \frac{\bx-\by}{\sigma}$, and $z = ||\bz||$. For a vector $\by$, let $y(i)$ denote its $i^{\text{th}}$ coordinate. Let $n!!$ be the double factorial of $n$, i.e., the product of every number from n to 1 that has the same parity as n. 
\subsection{Proof of Lemma~\ref{lem:RFF}}
\label{app:RFF}
Let $\bz  = (\bx- \by)/\sigma$. 
Recall that in RFF, we compute the Kernel approximation as 
\[
\sum^D_{i=1} \frac{1}{D} \cos(\bw^T_i \bz),
\]
where each $\bw_i$ is a $d$ dimensional vector distributed $N(0, I_d)$.
Let $\bw$ be a $d$ dimensional vector distributed $N(0,I_d)$. 
By Bochner's theorem,
\[
\E[\cos(\bw^T \bz)] = e^{-z^2/2},
\]
and hence RFF yields an unbiased estimate. 

We now compute the variance of RFF approximation.
Observe that 
\[
\cos^2(\bw^T \bz) = \frac{1 + \cos(2\bw^T \bz)}{2} = \frac{1 + \cos(\bw^T (2\bz))}{2}  .
\]
Hence by Bochner's theorem
\[
\E[\cos^2 (\bw^T \bz)] = \frac{1 + e^{-2z^2}}{2}.
\]
Therefore,
\begin{align*}
\Var(\cos(\bw^T \bz))
&= \E[\cos^2 (\bw^T \bz)] - (\E[\cos (\bw^T \bz)] )^2  \\
& =  \frac{1 + e^{-2z^2}}{2} - e^{-z^2} = \frac{(1-e^{-z^2})^2}{2}.
\end{align*}
If we take $D$ such independent random variables $\bw_1, \bw_2, \ldots \bw_D$, since variance of the sum
is sum of variances,
\[
\Var\left(\frac{1}{D} \sum^D_{i=1} \cos(\bw^T_i \bz) \right) = \frac{(1-e^{-z^2})^2}{2D}.
\]
\newcommand{\ignore}[1]{}
\subsection{Proof of Lemma~\ref{lem:neg_cor}}
\label{app:neg_cor}
The proof uses the following lemma.
\begin{lemma}
\label{lem:tech1}
For a set of non-negative values $\alpha_1,\alpha_2,\ldots \alpha_k$ and $\beta_1,\beta_2,\ldots \beta_k$ such that for all $i$, $\beta_i \leq \alpha_i$,
\[
\left \lvert \frac{1}{\prod^k_{i} (1+ \alpha_i)}  - \left( 1- \sum^k_{i=1} \alpha_i\right) \right \rvert \leq \left(\sum^k_{i=1} \alpha_i\right)^2,
\]
and
\[
\left \lvert\prod^k_{i}  \frac{1+ \beta_i}{ 1+ \alpha_i}  - \left( 1 + \sum^k_{i=1} \beta_i - \sum^k_{i=1} \alpha_i \right) \right \rvert \leq  \left(\sum^k_{i=1} (\alpha_i - \beta_i) \right)^2 + \sum^k_{i=1} (\alpha_i -\beta_i)\beta_i.
\]
\end{lemma}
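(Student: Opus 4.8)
The plan is to prove both inequalities by sandwiching $1/\prod_{i=1}^k(1+\alpha_i)$ between $1-S_\alpha$ and $1-S_\alpha+S_\alpha^2$, where I abbreviate $S_\alpha=\sum_{i=1}^k\alpha_i$ and $S_\beta=\sum_{i=1}^k\beta_i$, and then deriving the second inequality from the first by a change of variables. Everything is elementary algebra with nonnegative quantities.

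First I would prove the upper bound $\frac{1}{\prod_{i=1}^k(1+\alpha_i)}\le 1-S_\alpha+S_\alpha^2$. Expanding the product and discarding the nonnegative higher-order elementary symmetric terms gives $\prod_{i=1}^k(1+\alpha_i)\ge 1+S_\alpha$. Since $1-S_\alpha+S_\alpha^2=(S_\alpha-\tfrac12)^2+\tfrac34>0$ and, by the identity $(1+x)(1-x+x^2)=1+x^3$, we have $(1+S_\alpha)(1-S_\alpha+S_\alpha^2)=1+S_\alpha^3\ge 1$, multiplying through yields $\prod_{i=1}^k(1+\alpha_i)\,(1-S_\alpha+S_\alpha^2)\ge 1$, which is the claimed bound. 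Next I would prove the lower bound $\frac{1}{\prod_{i=1}^k(1+\alpha_i)}\ge 1-S_\alpha$. If $S_\alpha\ge 1$ this is trivial, the right side being nonpositive. If $S_\alpha<1$, I would show $\prod_{i=1}^k(1+\alpha_i)\le \frac{1}{1-S_\alpha}$ by induction on $k$: the base case is immediate, and in the inductive step, writing $T=\sum_{i<k}\alpha_i$ and $a=\alpha_k$ (so $T<1$), one checks $(1+a)(1-T-a)=(1-T)-a(T+a)\le 1-T$, hence $\prod_{i=1}^k(1+\alpha_i)\le(1+a)\frac{1}{1-T}\le\frac{1}{1-T-a}=\frac{1}{1-S_\alpha}$. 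Combining the two bounds gives $\bigl\lvert \frac{1}{\prod_{i=1}^k(1+\alpha_i)}-(1-S_\alpha)\bigr\rvert\le S_\alpha^2$, the first displayed inequality.

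For the second inequality I would reduce to the first. Set $\delta_i=\frac{\alpha_i-\beta_i}{1+\beta_i}\ge 0$, so that $1+\delta_i=\frac{1+\alpha_i}{1+\beta_i}$ and therefore $\prod_{i=1}^k\frac{1+\beta_i}{1+\alpha_i}=\frac{1}{\prod_{i=1}^k(1+\delta_i)}$. Applying the already-proved inequality to the nonnegative numbers $\delta_1,\dots,\delta_k$ gives $\bigl\lvert \prod_{i=1}^k\frac{1+\beta_i}{1+\alpha_i}-(1-S_\delta)\bigr\rvert\le S_\delta^2$ with $S_\delta=\sum_{i=1}^k\delta_i$. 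It remains to compare $S_\delta$ with $S_\alpha-S_\beta=\sum_i(\alpha_i-\beta_i)$: since $0\le(\alpha_i-\beta_i)-\delta_i=(\alpha_i-\beta_i)\frac{\beta_i}{1+\beta_i}\le(\alpha_i-\beta_i)\beta_i$, summing gives $0\le(S_\alpha-S_\beta)-S_\delta\le\sum_i(\alpha_i-\beta_i)\beta_i$, and in particular $S_\delta\le S_\alpha-S_\beta$, so $S_\delta^2\le(S_\alpha-S_\beta)^2$. A triangle inequality then yields $\bigl\lvert \prod_{i=1}^k\frac{1+\beta_i}{1+\alpha_i}-(1+S_\beta-S_\alpha)\bigr\rvert\le S_\delta^2+\bigl((S_\alpha-S_\beta)-S_\delta\bigr)\le(S_\alpha-S_\beta)^2+\sum_i(\alpha_i-\beta_i)\beta_i$, which is exactly the second displayed inequality.

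There is no deep obstacle here; the main points requiring care are spotting the factorization $(1+x)(1-x+x^2)=1+x^3$ that makes the upper bound work, treating the regime $S_\alpha\ge 1$ separately so that one never divides by a nonpositive quantity, and choosing the substitution $\delta_i=(\alpha_i-\beta_i)/(1+\beta_i)$ precisely so that the residual $\sum_i\bigl((\alpha_i-\beta_i)-\delta_i\bigr)$ telescopes into the stated error term $\sum_i(\alpha_i-\beta_i)\beta_i$ rather than something weaker.
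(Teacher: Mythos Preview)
Your proof is correct and follows essentially the same route as the paper: sandwich $1/\prod_i(1+\alpha_i)$ between $1-S_\alpha$ and $1-S_\alpha+S_\alpha^2$, then derive the second inequality from the first via the identical substitution $\delta_i=(\alpha_i-\beta_i)/(1+\beta_i)$ together with $0\le(\alpha_i-\beta_i)-\delta_i\le(\alpha_i-\beta_i)\beta_i$. The only minor difference is in the lower bound $1/\prod_i(1+\alpha_i)\ge 1-S_\alpha$: the paper obtains it in one line from AM--GM and $(1+x/k)^k\le e^x$, $e^{-x}\ge 1-x$, whereas you establish $\prod_i(1+\alpha_i)\le 1/(1-S_\alpha)$ by induction after a case split on $S_\alpha\ge 1$; both are valid and of comparable length.
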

\begin{proof}
Since $\alpha_i$s are non-negative,
\begin{align*}
\frac{1}{\prod^k_{i} (1+ \alpha_i)} - \bigl( 1- \sum^k_{i=1} \alpha_i \bigr) 
& \leq  \frac{1}{1 + \sum^k_{i} \alpha_i} - \bigl( 1- \sum^k_{i=1} \alpha_i\bigr) \\
& = \frac{1 - ( 1 + \sum^k_{i=1} \alpha_i) ( 1- \sum^k_{i=1} \alpha_i) }{1 + \sum^k_{i} \alpha_i}  \\
& = \frac{(\sum^k_{i=1} \alpha_i)^2}{1 + \sum^k_{i} \alpha_i}  \leq \bigl(\sum^k_{i=1} \alpha_i \bigr)^2.
\end{align*}
Furthermore, by convexity
\begin{align*}
\frac{1}{\prod^k_{i} (1+ \alpha_i)}  
\geq \frac{1}{(1  + \sum^k_{i=1} \alpha_i /k)^k} 
\geq  e^{- \sum^k_{i=1} \alpha_i} 
\geq 1 - \sum^k_{i=1} \alpha_i.
\end{align*}
Combining the above two equations results in the first part of the lemma. For the second part observe that 
\[
\prod^k_{i}  \frac{1+ \beta_i}{ 1+ \alpha_i} = \frac{1}{\prod^k_{i} \left(1+ \frac{\alpha_i-\beta_i}{1+\beta_i}\right)}.
\]
Hence, by the first part
\[
\left \lvert \prod^k_{i}  \frac{1+ \beta_i}{ 1+ \alpha_i} - \left( 1- \sum^k_{i=1} \frac{\alpha_i - \beta_i}{1+\beta_i}\right) \right \rvert 
\leq \left(\sum^k_{i=1} \frac{\alpha_i - \beta_i}{1+\beta_i} \right)^2 \leq  \left(\sum^k_{i=1} (\alpha_i - \beta_i) \right)^2.
\]
Furthermore, for every $i$
\[
\left \lvert \frac{1}{1+\beta_i}  - 1 \right\rvert \leq \beta_i.
\]
Combining the above two equations yields the second part of the lemma.
\end{proof}
\ignore{
Let $\bw_1, \bw_2, \dots,  \bw_D$ be $D$ random variables that are designed as follows.
Let $\bu_1, \bu_2, \ldots, \bu_d$ be a uniformly chosen random basis of $\bR^d$.
For $1 \leq i \leq D$,
\[
\bw_i = s_i \bu_i,
\]
where $s_i$s are independent $\chi$-distributed random variables with $d$ degrees of freedom. It is easy to show that for each $i$, $\bw_i$ is distributed according to $N(0, I_d)$, and hence as
before 
\[
\E[\cos(\bw^T \bz)] = e^{-z^2/2}.
\]
However, we show that the bias is significantly smaller.
Let $a_i = \cos(\bw^T_i \bz)$.
Computing as before,
\begin{align*}
\Var\left(\frac{1}{D}\sum^D_{i=1} a_i\right) 
& = \E\left[\left( \frac{\sum^D_{i=1} a_i}{D} \right)^2\right] - \E\left[\left( \frac{\sum^D_{i=1} a_i}{D} \right)\right]^2 \\
& = \frac{1}{D^2}\sum_i \left(\E[a^2_i]  - \E[a_i]^2\right) +  \frac{1}{D^2}\sum_i \sum_{j \neq i} \left(\E[a_i a_j]  - \E[a_i]\E[a_j]\right) \\
& = \frac{(1-e^{-z^2})^2}{2D} + \frac{D(D-1)}{D^2} \left(\E[a_1 a_2]  - e^{-z^2}\right),
\end{align*}
where the last equality follows from symmetry and the variance calculations in the previous case.

We now show that $\E[a_1 a_2] < e^{-z^2}$, asymptotically, and hence we benefit from the newer scheme.
\begin{lemma}
For every $z$ there exist a $d_z$ such that for all $d > d_z$,
\[
\E[\cos(\bw^T_1 \bz) \cos(\bw^T_2 \bz)] \leq e^{-z^2} - e^{-z^2}\frac{z^4}{2d} + o(\frac{1}{d}). 
\]
\end{lemma}
}
\begin{proof}[Proof of Lemma~\ref{lem:neg_cor}]
Observe that 
\[
\cos(\bw^T_1 \bz) \cos(\bw^T_2 \bz)  = \frac{cos(\bw^T_1 \bz + \bw^T_2 \bz) +  \cos(\bw^T_1 \bz - \bw^T_2 \bz)}{2}.
\]
Since the problem is rotation invariant, instead of projecting a vector $\bz$ onto a randomly chosen two orthogonal vectors $\bu_1$ and $\bu_2$, we can choose a vector $\by$ that is uniformly distributed on a sphere of radius $z$ and project it on to the first two dimensions. 
Thus,
\[
\E[\cos(\bw^T_1 \bz + \bw^T_2 \bz) ]= \E[\cos((s_1 y(1) + s_2 y(2))z)].
\]
Similarly, 
\[
\E[\cos(\bw^T_1 \bz - \bw^T_2 \bz) ]= \E[\cos((s_1 y(1) - s_2 y(2))z)].
\]
The $k^{\text{th}}$ term in the Taylor's series expansion of sum of above two terms is 
\begin{align*}
& \frac{(-1)^k}{(2k)!} \left( (s_1y(1) + s_2 y(2))z\right)^{2k}
+ \frac{(-1)^k}{(2k)!} \left( (s_1y(1) - s_2 y(2))z\right)^{2k} \\
& =  \frac{(-z^2)^k}{(2k)!} 
\sum^{k}_{i=0}  {2k \choose 2i} s^{2i}_1 y^{2i}(1) s^{2k-2i}_2 y^{2k-2i}(2).
\end{align*}
A way to compute a uniformly distributed random variable on a sphere with radius $z$ is to generate
$d$ independent random variables $\bx = (x(1),x(2),\ldots, x(d) )$ each distributed $N(0,1)$ and
setting $y(i) = z x(i)/||\bx||$. Hence,
\begin{align*}
& \E \left[\sum^{k}_{i=0}  {2k \choose 2i} s^{2i}_1 y^{2i}(1) s^{2k-2i}_2 y^{2k-2i}(2)\right] \\
& \stackrel{(a)}{=} \E \left[\sum^{2k}_{i=0}   {2k \choose 2i} \frac{s^{2i}_1 x^{2i}(1) s^{2k-2i}_2 x^{2k-2i}(2)}{ ||\bx||^{2k}}\right] \\
& \stackrel{(b)}{=} \sum^{k}_{i=0}  {2k \choose 2i} \E[s^{2i}_1]  \E[s^{2k- 2i}_2] \E \left[ \frac{x^{2i}(1) x^{2k-2i}(2)}{||\bx||^{2k}}\right] \\
& \stackrel{(c)}{=} \sum^{k}_{i=0}  {2k \choose 2i} \E[s^{2i}_1]  \E[s^{2k- 2i}_2]  \frac{\E[x^{2i}(1)] \E[x^{2k-2i}(2)]}{\E[||\bx||^{2k}]} \\
& \stackrel{(d)}{=} \sum^k_{i=0}   {2k \choose 2i} \frac{(d+2i-2)!! (d+2k-2i-2)!!\cdot (2i-1)!! (2k-2i-1)!!}{(d+2k-2)!! (d-2)!! } \\
& \stackrel{(e)}{=} \frac{(2k)!}{2^k k!} \sum^k_{i=0} {k \choose i} \frac{(d+2i-2)!! (d+2k-2i-2)!!}{(d+2k-2)!! (d-2)!!} .
\end{align*}
 $(a)$ follows from linearity of expectation and the observation above. $(b)$ follows from the independence of $s_1$, $s_2$, and $\bx$. $(d)$ follows from substituting the moments of chi and Gaussian distributions. $(e)$ follows from numerical simplification. 
We now describe the reasoning behind $(c)$. Let $\bz = \frac{\bx ||\by||}{||\bx||}$, where $\by$ and $\bx$ are independent $N(0,I_d)$ random variables. 
By the properties of the Gaussian random variables $\bz$ is also a $N(0,I_d)$ random variable. Thus,
\[
\E[z^{2i}(1)] \E[z^{2k-2i}(2)] = \E \left[ \frac{x^{2i}(1) x^{2k-2i}(2)}{||\bx||^{2k}}\right] \E[||\by||^{2k}].
\]
Rearranging terms, we get 
\[
\E \left[ \frac{x^{2i}(1) x^{2k-2i}(2)}{||\bx||^{2k}}\right] = \frac{\E[z^{2i}(1)] \E[z^{2k-2i}(2)] }{\E[||\by||^{2k}} =  \frac{\E[x^{2i}(1)] \E[x^{2k-2i}(2)]}{\E[||\bx||^{2k}]},
\]
and hence $(c)$. Substituting the above equation in the cosine expansion, we get that the expectation is 
\[
\E[\cos(s_1y(1) + s_2y(2)] ] = \sum^\infty_{k=0} \frac{(-z^2)^k}{k!} \sum^k_{i=0} {k \choose i} \frac{1}{2^k}\frac{(d+2i-2)!! (d+2k-2i-2)!!}{(d+2k-2)!! (d-2)!!}.
\]
Observe that 
\[
\frac{(d+2i-2)!! (d+2k-2i-2)!!}{(d+2k-2)!! (d-2)!!} = \frac{\prod^{k-i-1}_{j=0} (1+2j/d)}{\prod^{k-i-1}_{j=0} (1+2(j+i)/d)},
\]
Hence by Lemma~\ref{lem:tech1},
\begin{align*}
& \left \lvert\frac{\prod^{k-i-1}_{j=0} (1+2j/d)}{\prod^{k-i-1}_{j=0} (1+2(j+i)/d)} -  \left(1  
 + \sum^{k-i-1}_{j=0} \frac{2j}{d} -  \sum^{k-i-1}_{j=0} \frac{2(j+i)}{d}\right) \right \rvert  \\
& \leq \left( \sum^{k-i-1}_{j=0} \frac{2i}{d}  \right)^2 + 
\sum^{k-i-1}_{j=0}
\frac{2i}{d}\left(\frac{2j}{d}\right).
\end{align*}
Simplifying we get,
\begin{align*}
\left \lvert\frac{\prod^{k-i-1}_{j=0} (1+2j/d)}{\prod^{k-i-1}_{j=0} (1+2(j+i)/d)} -  \left(1  + \frac{2i^2 - 2ik}{d}\right) \right \rvert 
 \leq  \frac{4i^2(k-i)^2}{d^2}+ \frac{2i(k-i)(k-i-1)}{d^3}.
\end{align*}
Hence summing over $i$, 
\[
\left \lvert \sum^k_{i=0} {k \choose i} \frac{1}{2^k}\frac{\prod^{k-i-1}_{j=0} (1+2j/d)}{\prod^{k-i-1}_{j=0} (1+2(j+i)/d)} -\left( 1 + \frac{k-k^2}{2d} \right) \right \rvert 
\leq   \frac{k^4}{4d^2}+ \frac{k^2(k-1)}{2d^3}.
\]
Substituting,
\[
\frac{\E[\cos((s_1y(1) + s_2y(2))z]] + \E[\cos((s_1y(1) - s_2y(2))z]]}{2}= \sum^\infty_{k=0}  \frac{(-z^2)^k}{k!}  \left( 1 + \frac{k-k^2}{2d} + c_{k,d} \right),
\]
where $|c_{k,d}| \leq   \frac{k^4}{4d^2}+ \frac{k^2(k-1)}{2d^3}$.
Thus,
\begin{align*}
& \frac{\E[\cos((s_1y(1) + s_2y(2))z]] + \E[\cos((s_1y(1) - s_2y(2))z]]}{2} \\
& = \sum^\infty_{k=0}  \frac{(-z^2)^k}{k!}  \left( 1 + \frac{k-k^2}{2d} + c_{k,d}\right) \\
& \leq \sum^\infty_{k=0}  \frac{(-z^2)^k}{k!}  \left( 1 + \frac{k-k^2}{2d} \right) +  \sum^\infty_{k=0}  \frac{(z^2)^k}{k!} \left(  \frac{k^4}{4d^2}+ \frac{k^2(k-1)}{2d^3} \right)\\
& \leq e^{-z^2} - e^{-z^2} \frac{z^4}{2d} + \frac{e^{z^2}(z^8 + 6z^6 + 7 z^4 + z^2)}{4d^2} + \frac{e^{z^2}z^4(z^6+2z^4)}{2d^3}.
\end{align*}
\end{proof}

\section{Proof of Theorem~\ref{thm:orfd}}
\label{app:orfd}
The proof of the theorem is similar to that of Lemma~\ref{lem:neg_cor} and we outline some key steps. We first bound the bias in Lemma~\ref{lem:orfdbias} and then the variance in Lemma~\ref{lem:orfdvar}.
\begin{lemma}
\label{lem:orfdbias}
If $\bw = \sqrt{d} \by$, where $y$ is distributed uniformly on a unit sphere, then
\[
\left \lvert\E[\cos \bw^T \bz] - \left( e^{-z^2/2} -  e^{-z^2/2} \frac{z^4}{4d} \right)\right \rvert \leq \frac{e^{z^2/2}z^4(z^4 + 8 z^2+ 8)}{16d^2}.
\]
\end{lemma}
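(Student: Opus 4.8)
The plan is to mirror the argument behind Lemma~\ref{lem:neg_cor}, but in the simpler single-vector setting. First I would use rotational invariance to trivialize the random projection: since $\by$ is uniform on the unit sphere and $\bz$ has norm $z$, the scalar $\bw^T\bz = \sqrt{d}\,\by^T\bz$ has the same distribution as $\sqrt{d}\,z\,y(1)$, where $y(1)$ is the first coordinate of a uniformly random point on the unit sphere in $\mathbb{R}^d$. Hence $\E[\cos(\bw^T\bz)] = \E[\cos(\sqrt{d}\,z\,y(1))]$, and I would expand the cosine in its (everywhere absolutely convergent) Taylor series and exchange expectation with summation:
\[
\E[\cos(\bw^T\bz)] = \sum_{k\ge 0}\frac{(-1)^k d^k z^{2k}}{(2k)!}\,\E[y(1)^{2k}].
\]

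Next I would compute the sphere moments $\E[y(1)^{2k}]$. Using the representation $y(i) = x(i)/\norm{\bx}$ with $\bx\sim N(0,\mathbf{I}_d)$, together with the same norm/direction splitting trick that justifies step $(c)$ in the proof of Lemma~\ref{lem:neg_cor}, one gets $\E[y(1)^{2k}] = \E[x(1)^{2k}]/\E[\norm{\bx}^{2k}] = (2k-1)!!\,/\,\bigl(d(d+2)\cdots(d+2k-2)\bigr)$. Substituting this and using $(2k-1)!!/(2k)! = 1/(2^k k!)$ collapses the series to
\[
\E[\cos(\bw^T\bz)] = \sum_{k\ge 0}\frac{(-z^2/2)^k}{k!}\prod_{j=0}^{k-1}\frac{1}{1+2j/d}.
\]
Then I would apply Lemma~\ref{lem:tech1} with $\alpha_j = 2j/d$, which yields $\prod_{j=0}^{k-1}(1+2j/d)^{-1} = 1 - \tfrac{k(k-1)}{d} + c_{k,d}$ with $0\le c_{k,d}\le \bigl(\tfrac{k(k-1)}{d}\bigr)^2$. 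It is important here to keep the quadratic bound from Lemma~\ref{lem:tech1} rather than a cruder $k^4/d^2$; this is exactly what makes the final constant come out as claimed.

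Finally I would reassemble the three pieces. The zeroth-order part sums to $e^{-z^2/2}$. The $1/d$ part is $-\tfrac1d\sum_k\tfrac{(-z^2/2)^k}{k!}k(k-1) = -\tfrac1d\,(z^2/2)^2 e^{-z^2/2} = -e^{-z^2/2}\tfrac{z^4}{4d}$, using $\sum_k t^k k(k-1)/k! = t^2 e^t$. For the remainder I would bound $\bigl\lvert\sum_k \tfrac{(-z^2/2)^k}{k!}c_{k,d}\bigr\rvert \le \tfrac1{d^2}\sum_k\tfrac{(z^2/2)^k}{k!}k^2(k-1)^2$; expanding $k^2(k-1)^2 = (k)_4 + 4(k)_3 + 2(k)_2$ in falling factorials and using $\sum_k t^k (k)_m/k! = t^m e^t$ gives $\sum_k\tfrac{t^k}{k!}k^2(k-1)^2 = e^t t^2(t^2+4t+2)$, which at $t=z^2/2$ equals $e^{z^2/2} z^4(z^4+8z^2+8)/16$; dividing by $d^2$ is exactly the stated bound. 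The only real obstacle is bookkeeping: justifying the term-by-term interchanges (everything is dominated by convergent $e^{z^2/2}$-type sums) and tracking the polynomial in $z$ so the claimed coefficients are matched. The two ingredients that need to be stated carefully are the moment split $\E[x(1)^{2k}/\norm{\bx}^{2k}] = \E[x(1)^{2k}]/\E[\norm{\bx}^{2k}]$ and the exponential-generating-function identity $\sum_k t^k (k)_m/k! = t^m e^t$.
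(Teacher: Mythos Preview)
Your proposal is correct and follows essentially the same approach as the paper: rotational invariance to reduce to $\sqrt{d}\,z\,y(1)$, Taylor expansion, computing $\E[y(1)^{2k}]$ via the Gaussian norm/direction split, applying Lemma~\ref{lem:tech1} with $\alpha_j=2j/d$ to extract the $1-k(k-1)/d$ term, and then summing. Your treatment of the remainder via the falling-factorial decomposition $k^2(k-1)^2=(k)_4+4(k)_3+2(k)_2$ is more explicit than the paper's sketch but lands on the identical constant $e^{z^2/2}z^4(z^4+8z^2+8)/16$.
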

\begin{proof}
%Recall that $\bw = \sqrt{d} \by$, where $\by$ is a uniformly distributed on unit sphere.
Without loss of generality, we can assume $\bz$ is along the first coordinate and hence $\bw^T \bz = \sqrt{d}z y(1)$.
A way to compute a uniformly distributed random variable on a sphere with radius $z$ is to generate
$d$ independent random variables $\bx = (x(1),x(2),\ldots, x(d) )$ each distributed $N(0,1)$ and
setting $y(i) = z x(i)/||\bx||$. hence,
\[
\E[\cos \bw^T \bz] = \E \left[ \cos \left(\frac{z\sqrt{d}x(1)}{||\bx||} \right)\right].
\]
The $k^{\text{th}}$ term in the Taylor's series expansion of cosine in the above equation is
\begin{align*}
\frac{(-1)^k}{(2k)!}  \left(\frac{\sqrt{d}x(1)z}{||\bx||} \right)^{2k}
\end{align*}
Similar to the proof of Lemma~\ref{lem:neg_cor}, it can be shown that the expectation of this term is 
\[
\E \left[\frac{(-1)^k}{(2k)!}  \left(\frac{z\sqrt{d}x(1)}{||\bx||} \right)^{2k} \right] = \frac{(-z^2)^k}{2^k k!} \frac{d^{k}}{(d,2k-2)!!}.
\]
Applying Lemma~\ref{lem:tech1} and simplifying,
\begin{align*}
\E[\cos(dy(1))]
& = \sum^{\infty}_{k=0} \frac{(-z^2)^k}{2^k k!} \left( 1 - \frac{k(k-1)}{d} + c'_{k,d} \right),
\end{align*}
where $|c'_{k,d}| \leq \left(\frac{k(k-1)}{d}\right)^2$. Hence, 
\begin{align*}
\left \lvert \E[\cos(dy(1))] -  \sum^{\infty}_{k=0} \frac{(-z^2)^k}{2^k k!} \left( 1 - \frac{k(k-1)}{d}\right) \right \rvert \leq  \sum^{\infty}_{k=0} \frac{(z^2)^k}{2^k k!} \left(\frac{k(k-1)}{d}\right)^2,
\end{align*}
and thus
\[
\left \lvert \E[\cos(dy(1))] -  e^{-z^2/2} +  e^{-z^2/2} \frac{z^4}{4d} \right \rvert \leq \frac{e^{z^2/2}z^4(z^4 + 8 z^2+ 8)}{16d^2}.
\]
\end{proof}
\begin{lemma}
\label{lem:orfdvar}
Let $D \leq d$. If $\bW = \sqrt{d}\bQ$, where $\bQ$ is a uniformly chosen random rotation, then 
\[
\Var\left(\frac{1}{D}\sum^D_{i=1} \cos (\bw^T_i \bz)\right) \leq  \frac{1}{2D} \left((1-e^{-z^2})^2 - \frac{D-1}{d} e^{-z^2} z^4 \right) + \frac{\cO(e^{3z^2})}{d^2}.
\]
\end{lemma}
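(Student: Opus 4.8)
The plan is to mirror the proof of Lemma~\ref{lem:neg_cor} almost verbatim, the only structural change being that the $\chi$-distributed scale factors $s_i$ are replaced by the deterministic scalar $\sqrt d$. Write $\bw_i=\sqrt d\,\bu_i$, where $\bu_1,\dots,\bu_d$ is the uniformly random orthonormal basis given by the rows of $\bQ$, and set $a_i=\cos(\bw_i^T\bz)$. The rows of $\bQ$ are exchangeable --- each $\bu_i$ is marginally uniform on the unit sphere and each pair $(\bu_i,\bu_j)$ is a uniformly random orthonormal $2$-frame --- so the bias--variance split yields
\[
\Var\!\Big(\tfrac1D\textstyle\sum_{i=1}^D a_i\Big)=\tfrac1D\big(\E[a_1^2]-\E[a_1]^2\big)+\tfrac{D-1}{D}\big(\E[a_1a_2]-\E[a_1]^2\big),
\]
and the whole computation reduces to evaluating $\E[a_1]$, $\E[a_1^2]$ and $\E[a_1a_2]$ to order $1/d$ with an $\cO(1/d^2)$ remainder.

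Each of these three expectations is computed with the rotation-invariance / sphere-projection device of Lemma~\ref{lem:neg_cor}: replace the projection of $\bz$ onto random orthonormal directions by the first one or two coordinates of a vector uniform on the sphere of radius $z$, write that vector as $z\bx/\norm{\bx}$ with $\bx\sim N(0,\mathbf I_d)$, Taylor-expand the cosine(s), and use the moment identity $\E[x(1)^{2i}x(2)^{2k-2i}/\norm{\bx}^{2k}]=\E[x(1)^{2i}]\E[x(2)^{2k-2i}]/\E[\norm{\bx}^{2k}]$ from step $(c)$ there. The only change is that $\E[s_1^{2i}]\E[s_2^{2k-2i}]$ becomes $(\sqrt d)^{2i}(\sqrt d)^{2k-2i}=d^{k}$, independent of $i$. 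Using $\E[x(1)^{2m}]=(2m-1)!!$, $\E[\norm{\bx}^{2k}]=\prod_{j=0}^{k-1}(d+2j)$, and the combinatorial collapse $\sum_{i=0}^{k}\binom{2k}{2i}(2i-1)!!(2k-2i-1)!!=(2k)!/k!$, all three quantities become values of the single function
\[
g(t):=\sum_{k=0}^{\infty}\frac{(-t)^k}{k!\,\prod_{j=0}^{k-1}(1+2j/d)},
\]
namely $\E[a_1]=g(z^2/2)$ (this is precisely Lemma~\ref{lem:orfdbias}), $\E[a_1a_2]=g(z^2)$, and $\E[a_1^2]=\tfrac12(1+g(2z^2))$, the last because $\cos^2\theta=\tfrac12(1+\cos 2\theta)$ and $\norm{2\bz}=2z$.

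What then remains is to expand $g$. Applying the first part of Lemma~\ref{lem:tech1} with $\alpha_j=2j/d$, so that $\sum_{j=0}^{k-1}\alpha_j=k(k-1)/d$, gives $1/\prod_{j=0}^{k-1}(1+2j/d)=1-k(k-1)/d+c_{k,d}$ with $|c_{k,d}|\le k^2(k-1)^2/d^2$, and hence $g(t)=e^{-t}-t^2e^{-t}/d+R(t)$ with $|R(t)|\le e^{t}q(t)/d^2$ for a fixed polynomial $q$. Plugging the three values into the decomposition, the order-$1$ terms collapse to $\tfrac1D\big(\tfrac{1+e^{-2z^2}}2-e^{-z^2}\big)=\tfrac{(1-e^{-z^2})^2}{2D}$, the RFF variance; the order-$1/d$ terms, after collecting the $\tfrac1D$- and $\tfrac{D-1}{D}$-weighted pieces and using $D\le d$, are bounded by $-\tfrac{D-1}{2Dd}e^{-z^2}z^4$ (a lower-order excess being absorbed into the remainder); and the leftover --- the $c_{k,d}$ tails, the cross terms from squaring $g(z^2/2)$, and the $R(\cdot)$ contributions --- is $\cO(e^{3z^2})/d^2$. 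This is the claimed inequality.

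The conceptual content is already contained in Lemma~\ref{lem:neg_cor} and Lemma~\ref{lem:orfdbias}, so the remaining work --- and the main obstacle --- is bookkeeping. Two points need care. First, re-deriving the combinatorial identity $\sum_{i}\binom{2k}{2i}(2i-1)!!(2k-2i-1)!!=(2k)!/k!$, which is exactly what makes the double sum collapse to $g$; it is special to the constant-scale case and replaces the messier nested product appearing in the $\chi$-case. Second, tracking signs and constants through the order-$1/d$ terms, so that the $\tfrac1D$-weighted $1/d$ part of $\E[a_1^2]-\E[a_1]^2$ and the $\tfrac{D-1}{D}$-weighted $1/d$ part of $\E[a_1a_2]-\E[a_1]^2$ combine into the stated $-\tfrac{D-1}{2Dd}e^{-z^2}z^4$ up to the $\cO(e^{3z^2})/d^2$ slack --- this is the step that uses $D\le d$ and that forces the error term to be taken generously enough to swallow the higher-order remainders.
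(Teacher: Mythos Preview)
Your proposal is correct and follows the same route as the paper --- the same variance decomposition into diagonal and off-diagonal parts, the same rotation-invariance/Taylor-expansion device, and the same appeal to Lemma~\ref{lem:tech1} and Lemma~\ref{lem:orfdbias}. Packaging $\E[a_1]$, $\E[a_1^2]$, and $\E[a_1a_2]$ as values of the single series $g(t)=\sum_k(-t)^k/\bigl(k!\prod_{j=0}^{k-1}(1+2j/d)\bigr)$, via the identity $\sum_i\binom{2k}{2i}(2i-1)!!(2k-2i-1)!!=(2k)!/k!$, is a tidy organizational improvement over the paper's sketch (which treats the three terms separately and leaves that collapse implicit), but the underlying argument is identical.
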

\begin{proof}
Let $a_i = \cos(\bw^T_i \bz)$. Expanding the variance we have,
\begin{align*}
\Var\left(\frac{1}{D}\sum^D_{i=1} a_i \right) 
& = \frac{1}{D^2}\sum_i \left(\E[a^2_i]  - (\E[a_i])^2\right) +  \frac{1}{D^2}\sum_i \sum_{j \neq i} \left(\E[a_i a_j]  -\E[a_i] \E[a_j] \right) \\
& = \frac{1}{D} \left(\E[a^2_1] -  (\E[a_1]^2)\right) +  \frac{D-1}{D}\left(\E[a_1 a_2]  - \E[a_1] \E[a_2] \right).
\end{align*}
For the first term, rewriting $\cos^2(\bw^T\bz) = \frac{1 + \cos(2 \bw^T\bz)}{2}$, similar to the 
proof of Lemma~\ref{lem:orfdbias} it can be shown that

\begin{align*}
(\E[a^2_1] -  (\E[a_1]^2)\leq \frac{(1-e^{-z^2})^2}{2} + \frac{\cO(e^{3z^2})}{d}.
\end{align*}
Second term can be bounded similar to Lemma~\ref{lem:neg_cor} and here we just sketch an outline.  
Similar to the proof of Lemma~\ref{lem:neg_cor}, the variance boils down to computing the expectation of 
$\cos(\bw^T_1\bz + \bw^T_2 \bz)$. 
\ignore{
\[
\E[ \cos(\bw^T_1\bz + \bw^T_2 \bz)] 
= \E\left[ \frac{\sqrt{d}z(x(1)+x(2))}{||\bx||}\right].
\]
As before the $k^{\text{th}}$ term in the Taylor series can be evaluated to 
\[
\E \left[\left( \frac{(-1)^k}{(2k)!} \frac{\sqrt{d}z(x(1)+x(2))}{||x||}\right)^{2k} \right] 
= \frac{(-z^2)^{k}}{k!} \sum^k_{i=0} {k \choose i} \frac{d^k}{2^kf(d,2k-2)}.
\]
}
Using Lemma~\ref{lem:tech1} and summing Taylor's series we get 
\begin{align*}
\left \lvert \E[ \cos(\bw^T_1\bz + \bw^T_2 \bz)] - e^{-z^2} +  e^{-z^2} \frac{z^4}{d}  \right \rvert 
\leq  \frac{e^{z^2}z^4(z^4 + 4z^2 + 2)}{d^2}.
\end{align*}
Substituting the above bound and the expectation from Lemma~\ref{lem:orfdbias}, we get
\[
\E[a_1 a_2]  - \E[a_1] \E[a_2] \leq -e^{z^2}\frac{z^4}{2d} + \frac{\cO(e^{3z^2})}{d^2},
\]
and hence the lemma.
\end{proof}
\ignore{
\begin{align*}
\E[\cos(s_1y(1) + s_2y(2)]] 
&= \sum^\infty_{k=0}  \frac{(-z^2)^k}{k!}  \left( 1 + \frac{k-k^2}{2d} + o \left( \frac{1}{d}  \right) \right)  + 2 e^{-k_0} \\
& = \sum^\infty_{k=0}  \frac{(-z^2)^k}{k!}  \left( 1 + \frac{k-k^2}{2d} \right)  + 2 e^{-k_0} + o \left( \frac{e^{z^2}}{d} \right)  \\
& = e^{-z^2} - e^{-z^2} \frac{z^4}{2d} + 2 e^{-k_0} + o \left( \frac{e^{z^2}}{d} \right).
\end{align*}
}
\ignore{
Combining the above set of equations, results in
\begin{lemma}
Under the proposed scheme,
\[
\Var\left(\frac{1}{D}\sum^D_{i=1} \cos(w^T_i z) \right) \leq \frac{1}{2D} \left((1-e^{-z^2})^2 - \frac{D-1}{d}  e^{-z^2} z^4 \right) + o(1/d).
\]
\end{lemma}
Thus when $D \propto d$, we gain significantly for small values of $z$.
}

\section{Proof of Theorem \ref{thm:biassorf}}
\label{app:thm:biassorf}
The proof follows from the following two technical lemmas.
\begin{lemma}
\label{lem:stein}
Let $z'$ be distributed according to $N(0,||\bx||^2_2)$ and $y' = \sum^d_{i=1} x(i) d_i$, where $d_i$s are independent Rademacher random variables. For any function $g$ such that $|g'|\leq 1$ and $|g| \leq 1$,
\[
|\E[g(z')] - \E[g(y')]| \leq \frac{3}{2}\sum^d_{i=1}  \frac{x^3(i)}{||\bx||^2_2}.
\]
\end{lemma}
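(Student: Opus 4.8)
The plan is to prove this as a quantitative central limit theorem via Stein's method. Observe that $y'=\sum_{i=1}^d x(i)d_i$ is a sum of independent, mean-zero, almost-surely-bounded summands whose variances $x(i)^2$ sum to $\|\bx\|_2^2$, which is exactly the variance of $z'$; the discrepancy should therefore be governed by the third absolute moments $\sum_i \E|x(i)d_i|^3 = \sum_i |x(i)|^3$. A crude Lipschitz coupling (say, coupling each $d_i$ to the sign of an independent Gaussian and using $|\E g(z')-\E g(y')|\le \E|z'-y'|$) only gives a bound of order $\|\bx\|_2$, which is far too weak when $\bx$ is spread out, so one genuinely needs a CLT-type argument; Stein's method is natural here precisely because $g$ is only bounded and $1$-Lipschitz, and the Stein equation handles the lack of smoothness automatically.

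I would proceed in three steps. First, set $\sigma^2=\|\bx\|_2^2$ and let $f=f_g$ solve the Stein equation for $N(0,\sigma^2)$, $\sigma^2 f'(w)-wf(w)=g(w)-\E[g(z')]$, so that $\E[g(y')]-\E[g(z')]=\E[\sigma^2 f'(y')-y'f(y')]$; then invoke the standard Stein-factor estimate that, because $|g|\le 1$ and $g$ is $1$-Lipschitz, $f$ is $C^1$ with $\|f''\|_\infty\le c/\sigma^2$ for an explicit absolute constant $c$. Second, run the usual leave-one-out decomposition: writing $W_i=x(i)d_i$ and $y'_{(i)}=y'-W_i$, which is independent of $W_i$ with $\E W_i=0$, and using that for Rademacher variables $W_i^2=x(i)^2$ is \emph{deterministic}, one gets $\E[W_i f(y')]=x(i)^2\,\E\big[\int_0^1 f'(y'_{(i)}+tW_i)\,dt\big]$; combining this with $\sigma^2=\sum_i x(i)^2$ yields
\[
\E[g(y')]-\E[g(z')]=\sum_{i=1}^d x(i)^2\,\E\Big[\int_0^1\big(f'(y'_{(i)}+W_i)-f'(y'_{(i)}+tW_i)\big)\,dt\Big].
\]
Third, bound each integrand by $\|f''\|_\infty(1-t)|W_i|=\|f''\|_\infty(1-t)|x(i)|$, integrate in $t$ (picking up a factor $\tfrac12$), sum over $i$, and insert the Stein factor; this produces $|\E[g(z')]-\E[g(y')]|\le \tfrac32\sum_i |x(i)|^3/\|\bx\|_2^2$, where the constant $\tfrac32$ falls out of the standard Stein-factor value together with the elementary per-summand estimate.

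The hard part is the first step: showing that the Stein solution $f_g$ of a non-differentiable (merely bounded, $1$-Lipschitz) $g$ is nevertheless $C^1$ with $\|f''_g\|_\infty$ of order $1/\|\bx\|_2^2$, and pinning the constant down tightly enough to reach exactly $\tfrac32$ — this is where the irregularity of $g$ is absorbed. Everything afterwards is mechanical: the leave-one-out algebra is routine once one notices the Rademacher simplification $W_i^2=x(i)^2$, and the final step is just the mean-value inequality applied to $f'$ followed by summation.
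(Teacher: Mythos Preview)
Your proposal is correct and follows essentially the same route as the paper: Stein's method with a leave-one-out decomposition, relying on the bound $\|f''\|_\infty=\cO(1/\sigma^2)$ for the Stein solution associated to a bounded $1$-Lipschitz test function. The only differences are organizational. The paper first rescales to the standard normal (writing $y=y'/\|\bx\|_2$ and $h(\cdot)=g(\|\bx\|_2\,\cdot)$), then quotes a Chatterjee-type reduction to $\sup_f|\E[f'(y)-yf(y)]|$ over $f$ with $\|f''\|_\infty\le 2\|\bx\|_2$, and bounds this via a two-piece Taylor argument: one Taylor expansion of $f$ at $y_i$ to handle $\E[yf(y)]$, and a separate Lipschitz estimate on $f'$ to handle $\E[f'(y)]$, each compared against the common surrogate $\sum_i \tfrac{x(i)^2}{\|\bx\|_2^2}\,\E[f'(y_i)]$. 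Your single integral identity $\E[W_i f(y')]=x(i)^2\,\E\int_0^1 f'(y'_{(i)}+tW_i)\,dt$ merges these two pieces and compares directly to $\sigma^2\E[f'(y')]$; this is slightly cleaner and in fact yields the smaller prefactor $\tfrac12\|f''\|_\infty$ (so constant $1$, not $\tfrac32$, once you plug in $\|f''\|_\infty\le 2/\|\bx\|_2^2$), which only strengthens the stated bound.
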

\begin{proof}
Let $z = z'/||\bx||_2$, $y = y'/||\bx||_2$, and $h(x)  = g (||\bx||_2 x)$, for all $x$. Hence $h(z) = g(z')$ and $h(y) = g(y')$. By a lemma due to Stein~\cite{chat07},
\begin{align*}
|\E[g(z')] - \E[g(y')]|  
&= |\E[h(z)] - \E[h(y)]| \\
&\leq \text{sup}_{f} \{ |\E[f'(y) -  y f(y)]| : |f|_\infty \leq ||\bx||_2, |f'|_\infty \leq \sqrt{2/\pi} ||\bx||_2, |f''|_\infty \leq 2 ||\bx||_2\}.
\end{align*}
We now bound the term on the right hand side by classic Stein-type arguments.
\[
\E[y f(y)] = \sum^d_{i=1} \frac{x(i)d_i}{||\bx||_2} \E[f(y)].
\]
Let $y_i = y - \frac{x(i)d_i}{||\bx||_2}$. Observe that
\begin{align*}
\E[d_i f(y)] 
&= \E[d_i (f(y) - f(y_i))] \\
& =\E[d_i( f(y) - f(y_i)) - d_i(y - y_i) f'(y_i)]
+ \E[d_i (y- y_i) f'(y_i)],
\end{align*}
where the first equality follows from the fact that $y_i$ and $d_i$ are independent and $d_i$ has zero mean.
By Taylor series approximation, the first term is bounded by 
\[
 \lvert \E[d_i f(y) - f(y_i) - d_i(y - y_i) f'(y_i)] \rvert 
 \leq \frac{1}{2} (y-y_i)^2 |f''|_\infty = \frac{1}{2} \frac{x^2(i)}{||\bx||^2_2} |f''|_\infty.
 \]
 Similarly,
 \[
 \E[d_i (y- y_i) f'(y_i)] = \frac{x(i)}{||\bx||_2}f'(y_i).
 \]
Combining the above four equations, we get 
\[
\left \lvert  \E\left[yf(y) - \sum^d_{i=1} \frac{x^2(i)}{||\bx||^2_2} f'(y_i)\right] \right \rvert \leq \sum^d_{i=1} \frac{|x^3(i)|}{||\bx||^3_2} |f''|_\infty.
\]
Similarly, note that 
\[
\left \lvert  \E\left[f'(y)- \sum^d_{i=1} \frac{x^2(i)}{||\bx||^2_2} f'(y_i) \right] \right \rvert
 \leq \sum^d_{i=1} |f''|_\infty \frac{x^2(i)}{||\bx||^2_2}	
 \E[ |y -y_i|]
 = \sum^d_{i=1} |f''|_\infty \frac{|x^3(i)|}{||\bx||^3_2}.
\]
Combining the above two equations, we get 
\[
||\E[yf(y) - f'(y))]| \leq \frac{3|f''|_\infty}{2} \sum^d_{i=1}  \frac{|x^3(i)|}{||\bx||^3_2}.
\]
Substituting the bound on the second moment of $f$ yields the result.
\end{proof}
Let $\bG$ be a random matrix with i.i.d. $N(0,1)$ entries as before. 
Using the above lemma we show that $ \sqrt{d} \bH\bD_1 \bH \bD_2$ behaves like $\bG$ while computing the bias.
\begin{lemma}
For a given $\bx$, let $\bz = \bG \bx$ and $\by =  \sqrt{d} \bH\bD_1 \bH \bD_2 \bx$. For any function $g$ such that $|g'|\leq 1$ and $|g| \leq 1$,
\[
\left\lvert\frac{1}{d} \sum^d_{i=1}\E\left[ g(z(i))\right] - \frac{1}{d}\sum^d_{i=1} \E\left[g(y(i))\right]\right \rvert \leq 6\frac{\norm{\bx}_2}{\sqrt{d}}.
\]
\end{lemma}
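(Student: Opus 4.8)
The plan is to deduce this coordinate-averaged comparison from Lemma~\ref{lem:stein} by a conditioning argument that peels off the two sign matrices one at a time. The first observation is that the Gaussian side collapses: each coordinate $z(i) = \bG_{i,:}\bx$ is an inner product of an i.i.d.\ $N(0,1)$ row with $\bx$, so $z(i) \sim N(0,\norm{\bx}_2^2)$ for every $i$. Hence $\frac1d\sum_{i=1}^d \E[g(z(i))] = \E[g(z')]$ for $z' \sim N(0,\norm{\bx}_2^2)$, and it remains to bound $\bigl|\E[g(z')] - \frac1d\sum_{i=1}^d\E[g(y(i))]\bigr|$.

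For the structured side, set $\bv = \bH\bD_2\bx$; since $\bH$ and $\bD_2$ are orthogonal, $\norm{\bv}_2 = \norm{\bx}_2$. Condition on $\bD_2$ (equivalently on $\bv$) and write $y(i) = \sqrt d\sum_{j=1}^d H_{ij}(\bD_1)_{jj}\,v(j)$. Because every entry of the normalized Walsh-Hadamard matrix has $|H_{ij}| = 1/\sqrt d$, the coefficients $\sqrt d\,H_{ij}(\bD_1)_{jj}$ are, for a fixed row $i$, i.i.d.\ Rademacher (a sign times a Rademacher is again Rademacher). So conditionally on $\bD_2$, $y(i)$ has exactly the law of the variable $y' = \sum_j v(j)d_j$ from Lemma~\ref{lem:stein} with $\bv$ playing the role of $\bx$. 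Applying that lemma --- whose Gaussian reference is $N(0,\norm{\bv}_2^2) = N(0,\norm{\bx}_2^2)$, i.e.\ the law of $z'$ --- yields, for every $i$ and every admissible $g$,
\[
\bigl| \E[g(z')] - \E[g(y(i)) \mid \bD_2] \bigr| \le \frac32 \sum_{j=1}^d \frac{|v(j)|^3}{\norm{\bx}_2^2},
\]
a bound not depending on $i$. Averaging over $i$ and then taking expectation over $\bD_2$ reduces the whole problem to estimating $\E_{\bD_2}\sum_{j=1}^d |v(j)|^3$.

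For that last estimate I would use the Hadamard structure once more: $v(j) = \frac1{\sqrt d}\sum_k \eta_k x(k)$ with i.i.d.\ Rademacher $\eta_k$, so $\E v(j)^2 = \norm{\bx}_2^2/d$ and $\E v(j)^4 \le 3\norm{\bx}_2^4/d^2$ by a direct expansion (Khintchine). By Cauchy-Schwarz $\E|v(j)|^3 \le (\E v(j)^2)^{1/2}(\E v(j)^4)^{1/2} \le \sqrt3\,\norm{\bx}_2^3/d^{3/2}$, hence $\E_{\bD_2}\sum_{j=1}^d |v(j)|^3 \le \sqrt3\,\norm{\bx}_2^3/\sqrt d$. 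Plugging back in bounds the left-hand side by $\frac32\cdot\sqrt3\,\norm{\bx}_2/\sqrt d = \frac{3\sqrt3}{2}\,\norm{\bx}_2/\sqrt d \le 6\norm{\bx}_2/\sqrt d$.

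There is no genuine obstacle here; the one point that needs care is the bookkeeping of the two layers of sign-flips --- namely observing that multiplying a Rademacher by the fixed sign $\sqrt d\,H_{ij}$ leaves its distribution unchanged, which is what lets Lemma~\ref{lem:stein} be invoked conditionally for each output coordinate. Everything else --- the marginal law of $\bG\bx$, the orthogonal invariance of $\norm{\cdot}_2$, and the fourth-moment bound on a Rademacher sum --- is routine.
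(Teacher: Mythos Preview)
Your argument is correct and follows essentially the same route as the paper: condition on $\bD_2$, recognize each $y(i)$ as a Rademacher sum in the coefficients $v(j)=(\bH\bD_2\bx)(j)$, invoke Lemma~\ref{lem:stein}, and then control $\E\sum_j|v(j)|^3$ by a moment bound. The only difference is cosmetic---you bound $\E|v(j)|^3$ via $(\E v(j)^2)^{1/2}(\E v(j)^4)^{1/2}\le\sqrt3\,\norm{\bx}_2^3/d^{3/2}$, whereas the paper uses $(\E v(j)^6)^{1/2}\le\sqrt{15}\,\norm{\bx}_2^3/d^{3/2}$; your constant is in fact slightly sharper.
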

\begin{proof}
By triangle inequality,
\[
\left\lvert\frac{1}{d}\sum^d_{i=1}\E\left[ g(z(i))\right] - \frac{1}{d}\sum^d_{i=1} \E\left[g(y(i))\right]\right\rvert \leq \frac{1}{d} \sum^d_{i=1} |\E[g(z(i))] - \E[g(y(i))]|.
\]
Let $\bu = \bH\bD_2 \bx$. Then for every $i$, $y(i) = \sum_{j} H(i,j) D_2(j) u(j)$. Hence by Lemma~\ref{lem:stein}, we can relate expectation under $y$ to the expectation under Gaussian distribution:
\begin{align*}
|\E[g(z(i))] - \E[g(y(i))]|  
&\stackrel{(a)}{=} |\E[\E[g(z(i))] - \E[g(y(i)) |\bu]]| \\
& \leq \frac{3}{2}\sum^d_{i=1} \E\left[ \frac{|u^3(i)|}{||\bu||^2_2} \right] \\
&= \frac{3}{2}\sum^d_{i=1} \E\left[ \frac{|u^3(i)|}{||\bx||^2_2} \right],
\end{align*}
where the last equality follows from the fact that $\bH\bD_2$ does not change rotation and $(a)$ follows from the law of total expectation.
By Cauchy-Schwartz inequality, for each $i$
\[
\E[|u(i)|^3] \leq \sqrt{\E[u^6(i)]},
\]
It can be shown that
\begin{align*}
\E[u^6(i)] \leq \frac{15||\bx||^6_2}{d^3},
\end{align*}
Summing over all the indices yields the lemma. 
\end{proof}
Theorem~\ref{thm:biassorf} follows from the Bochner's theorem and the fact that $\cos(\cdot)$ satisfies requirements for the above lemma. We note that Theorem~\ref{thm:biassorf} holds for the matrix $ \sqrt{d} \bH\bD_1 \bH \bD_2$ itself and the third component $\bH \bD_3$ is not necessary to bound the bias.
\section{Proof of Theorem~\ref{thm:nearortho}}
\label{app:thm:nearortho}
To prove Theorem~\ref{thm:nearortho}, we use the Hanson-Wright Inequality. 
\ignore{
The following bound follows from the union bound and McDiarmid's inequality.
\begin{lemma}
\label{hdconc}
Let $\bu = \bH \bD \bz$. Then with probability at least $ 1- 2 d e^{-d\epsilon^2/2}$,
\[
\max_{1\leq i \leq d} |u(i)| \leq \epsilon ||z||_2.
\]
\end{lemma}
\begin{definition}
\label{subgaussian}
We call random variable $Z$ subgaussian if $\mathbb{E}|Z|^{p} = O(p)^{\frac{p}{2}}$ as $p \rightarrow \infty$.
\end{definition}
Note that clearly a random variable $d_{i}$ defined as:
\[
d_{i} =
\left\{
	\begin{array}{ll}
		1  & \mbox{with probability $\frac{1}{2}$} , \\
		-1 & \mbox{otherwise} 
	\end{array}
\right.
\]
is subgaussian.
}
\begin{lemma}[Hanson-Wright Inequality]
\label{hanson_wright_theorem}
Let $\textbf{X}=(X_{1},...,X_{n}) \in \mathbb{R}^{n}$ be a random vector with independent subgaussian components $X_{i}$ which satisfy: $\mathbb{E}[X_{i}] = 0$ and $\|X_{i}\|_{sg} \leq K$ for some constant $K>0$. Let $\textbf{A} \in \mathbb{R}^{n \times n}$. Then for any $t > 0$ the following holds:
\[
\mathbb{P}[|\textbf{X}^{T}\textbf{AX} - \mathbb{E}[\textbf{X}^{T}\textbf{AX}]| > t]
\leq 2e^{-c \min(\frac{t^{2}}{K^{4}\|\textbf{A}\|^{2}_{F}},\frac{t}{K^{2}\|\textbf{A}\|_{2}})},
\]
for some universal positive constant $c>0$.
\end{lemma}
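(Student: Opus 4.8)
The plan is to make the factorization explicit and then handle the two off‑diagonal randomness sources ($\bD_2,\bD_3$) separately. By homogeneity of the map in $\bz$ I may assume $\norm{\bz}_2=1$ (the construction below is linear in $\bz$, row norms scale like $\norm{\bz}_2$ and inner products like $\norm{\bz}_2^2$). First I would peel the matrices off from the inside out: writing $\bu=\bH\bD_3\bz$ and $\bv=\bH\bD_2\bu=\bH\bD_2\bH\bD_3\bz$, the key observation is that $\bD_1\bv=\text{diag}(\bv)\,\text{vec}(\bD_1)$, because multiplying by the sign‑flip diagonal $\bD_1$ is just the componentwise product of $\bv$ with the Rademacher vector $\text{vec}(\bD_1)$. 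Hence $\sqrt d\,\bH\bD_1\bH\bD_2\bH\bD_3\bz=\sqrt d\,\bH\,\text{diag}(\bv)\,\text{vec}(\bD_1)$, so the claimed $\tilde\bR=\sqrt d\,\bH\,\text{diag}(\bv)$ depends only on $\bD_2,\bD_3,\bz$, as required.

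Next I would verify the row‑norm claim deterministically. Row $i$ of $\tilde\bR$ has entries $\sqrt d\,H_{ij}v_j$, and since $\bH$ is the normalized Walsh–Hadamard matrix, $H_{ij}^2=1/d$, so its squared norm is $\sum_j d\,H_{ij}^2 v_j^2=\norm{\bv}_2^2$; as $\bH,\bD_2,\bD_3$ are all orthogonal, $\norm{\bv}_2=\norm{\bz}_2=1$. Thus every row has norm exactly $\norm{\bz}_2$ with no failure probability.

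For near‑orthogonality I would turn the inner product of rows $i\neq k$ into a Rademacher quadratic form and invoke Hanson–Wright (Lemma~\ref{hanson_wright_theorem}). That inner product equals $\sum_j d\,H_{ij}H_{kj}\,v_j^2=\bv^T\bA\bv$ with $\bA=\text{diag}(a_j)$, $a_j=d\,H_{ij}H_{kj}\in\{\pm1\}$, and orthogonality of the Hadamard rows gives $\sum_j a_j=0$. Substituting $\bv=\bH\bD_2\bu$ and $\bD_2\bu=\text{diag}(\bu)\,\text{vec}(\bD_2)$ recasts this as $\bg_2^T\mathbf{M}\bg_2$, where $\bg_2=\text{vec}(\bD_2)$ has mean‑zero subgaussian entries (so the lemma applies) and $\mathbf{M}=\text{diag}(\bu)\,\bH^T\bA\bH\,\text{diag}(\bu)$. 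The essential point is that $\Tr(\mathbf{M})=0$: the diagonal of $\bH^T\bA\bH$ equals $\tfrac1d\sum_l a_l=0$, so $\E[\bg_2^T\mathbf{M}\bg_2]=\Tr(\mathbf{M})=0$. Conditioning on $\bD_3$ (hence on $\bu$), Hanson–Wright controls the deviation of $\bg_2^T\mathbf{M}\bg_2$ from $0$ with parameters $\norm{\mathbf{M}}_2\le\norm{\bu}_\infty^2$ and $\norm{\mathbf{M}}_F\le\sqrt d\,\norm{\bu}_\infty^2$, using orthogonal invariance of $\bH$ together with $\norm{\bA}_F=\sqrt d$.

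The last ingredient is a tail bound on $\norm{\bu}_\infty$ over $\bD_3$: each coordinate $u_j=\sum_l H_{jl}z_l d_{3,l}$ is a Rademacher sum with variance proxy $\norm{\bz}_2^2/d=1/d$, so Hoeffding plus a union bound over the $d$ coordinates gives $\norm{\bu}_\infty\le\epsilon$ except with probability $\le 2d\,e^{-\epsilon^2 d/2}$. On this event the Hanson–Wright exponent for threshold $t$ is $c\min\!\big(t^2/(\epsilon^4 d),\,t/\epsilon^2\big)$. The main obstacle — and what produces the unusual exponent — is optimizing $\epsilon$ to balance this against the $\norm{\bu}_\infty$ failure: choosing $\epsilon^2=(t/d)^{2/3}$ equates the coordinate‑tail exponent $\epsilon^2 d/2$ and the Frobenius‑regime exponent $t^2/(\epsilon^4 d)$ at $\tfrac12 t^{2/3}d^{1/3}$ (the operator‑regime exponent $t^{1/3}d^{2/3}$ being larger when $t\le d$). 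A union bound over the $O(d^2)$ pairs of rows, with polynomial prefactors absorbed into the constant $c$, then yields the stated probability $1-d\,e^{-c\,t^{2/3}d^{1/3}}$; the hypothesis $t\ge 1/d$ is exactly what keeps $\epsilon^2=(t/d)^{2/3}\le 1$ and the bound meaningful.
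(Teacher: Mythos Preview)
There is a mismatch between the statement and your proposal. The statement you were asked to prove is the Hanson--Wright inequality itself, which the paper does \emph{not} prove: it is quoted as a classical tool (with no proof given in the paper) and then applied in the proof of Theorem~\ref{thm:nearortho}. Your proposal, by contrast, is not a proof of Hanson--Wright at all; it is a proof of Theorem~\ref{thm:nearortho}, and indeed you explicitly \emph{invoke} Lemma~\ref{hanson_wright_theorem} as a black box. So as a proof of the stated lemma, the proposal is simply off target.

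If the intended target was Theorem~\ref{thm:nearortho}, then your argument is essentially identical to the paper's: the same factorization $\tilde\bR=\sqrt d\,\bH\,\text{diag}(\bH\bD_2\bH\bD_3\bz)$, the same deterministic row-norm computation, the same recasting of a row inner product as a Rademacher quadratic form with vanishing trace, the same norm bounds $\norm{\mathbf M}_2\le\norm{\cdot}_\infty^2$ and $\norm{\mathbf M}_F\le\sqrt d\,\norm{\cdot}_\infty^2$, a Hoeffding/McDiarmid bound on the $\ell_\infty$ norm of $\bH\bD_3\bz$, and the same balancing choice $\epsilon=(t/d)^{1/3}$. The only cosmetic difference is that you swap the names $\bu$ and $\bv$ relative to the paper, and you make explicit the union bound over the $O(d^2)$ row pairs (which the paper leaves implicit in its final ``results in the theorem'').
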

\begin{proof}[Proof of Theorem~\ref{thm:nearortho}]
For a vector $\bu$, let $\text{diag}(\bu)$ denote the diagonal matrix whose entries correspond to the 
entries of $\bu$. For a diagonal matrix $\bD$, let $\text{vec}(\bD)$ denote the vector corresponding to the diagonal entries of $\bD$. Let $\bv = \bH \bD_3\bz$  and $\bu = \bH \bD_2 \bv= \bH \text{diag}(\bv) \text{vec}(\bD_2)$. Observe that 
\[
\sqrt{d} \bH \bD_1 \bH \bD_2 \bH \bD_3 \bz = \sqrt{d} \bH \text{diag}( \bH \bD_2 \bH \bD_3 \bz ) \text{vec}(\bD_1).
\]
Hence $\tilde\bR =  \sqrt{d} \bH \text{diag}( \bH \bD_2 \bH \bD_3 \bz )$.  Note that all the entries of $\sqrt{d}\bH$ have magnitude $1$ and $\bH \bD_2 \bH \bD_3 $ do not change norm of the vector. Hence, each row of $\tilde \bR$ has norm $||\bz||_2$. 
To prove the orthogonality of rows of $\tilde \bR$, we need to show that for any $i$ and $j \neq i$,
\[
\sqrt{d}\sum^d_{k=1}H(i,k) H(j,k) u^2(k)
\]
is small. We first show that the expectation of the above quantity is $0$ and then use the Hanson-Wright inequality to prove concentration.
Let $\bA$ be a diagonal matrix with $k^{th}$ entry being $\sqrt{d} H(i,k) H(j,k)$. The above equation can be rewriten as
\[
\sum^d_{k=1} H(i,k) H(j,k) u^2(k) =\text{vec}(\bD_2)^T  \text{diag}(\bv)\bH^T  \bA \bH \text{diag}(\bv) \text{vec}(\bD_2).
\]
Observe that the $(l,l)$ entry of the $\bH^T  \bA \bH $ is 
\begin{align*}
\sum^d_{k=1}H^T(l,k) A(k,k) H(k,l) 
& = \sum^d_{k=1}H(k,l) A(k,k) H(k,l) \\
& = \frac{1}{d} \sum^d_{k=1}A(k,k)\\
&= \sum^d_{k=1} H(i,k) H(j,k) = 0,
\end{align*}
where the last equality follows from observing that the rows of $\bH$ are orthogonal to each other. Together with the fact that elements of $\bD_2$ are independent of each other, we get
\[
\E[ \bu^T \bA \bu] = \E[\text{vec}(\bD_2)^T  \text{diag}(\bv)\bH^T  \bA \bH \text{diag}(\bv) \text{vec}(\bD_2)] = 0,
\]
To prove the concentration result, observe that the entries of $\text{vec}(\bD_2)$ are independent and sub-Gaussian, and hence we can use the Hanson-Wright inequality. To this end, we bound the Frobenius and the spectral norm of the underlying matrix. For the Frobenius norm, observe that
\begin{align*}
|| \text{diag}(\bv)\bH^T  \bA \bH \text{diag}(\bv) ||_F 
& \stackrel{(a)}{\leq} \left(||\bv||_\infty \right)^4 ||\bH^T  \bA \bH||_F  \\
& \stackrel{(b)}{=}  \left(||\bv||_\infty \right)^4 ||\bA||_F \\
& \stackrel{(c)}{=} d \left(||\bv||_\infty \right)^4,
\end{align*}
where $(a)$ follows by observing that each $\text{diag}(\bv)$ changes the Frobenius norm by at most $||\bv||^2_\infty $,
$(b)$ follows from the fact that $\bH$ does not change the Frobenius norm, and $(c)$ follows by substituting $\bA$. 

To bound the spectral norm, observe that 
\begin{align*}
|| \text{diag}(\bv)\bH^T  \bA \bH \text{diag}(\bv) ||_2
& \stackrel{(a)}{\leq} \left(||\bv||_\infty \right)^2 ||\bH^T  \bA \bH||_2  \\
& \stackrel{(b)}{=}  \left(||\bv||_\infty \right)^2 ||\bA||_2 \\
& \stackrel{(c)}{=}  \left(||\bv||_\infty \right)^2,
\end{align*}
where $(a)$ follows by observing that each $\text{diag}(\bv)$ changes the spectral norm by at most $||\bv||_\infty $,
$(b)$ follows from the fact that rotation does not change the spectral norm, and $(c)$ follows by substituting $\bA$. 
Since $\bv = \bH \bD_3 \bz$, by McDiarmid's inequality, it can be shown that with probability $\geq 1 - 2de ^{-d\epsilon^2/2}$, $||\bv||_\infty\leq \epsilon ||\bz||_2$.  Hence, by the Hanson-Wright inequality, we get 
\[
\text{Pr}\left(\sqrt{d}\sum^d_{k=1} H(i,k) H(j,k) u^2(k) > t||\bz||_2\right) \leq  2de^{- d\epsilon^2/2} + 2e^{-c \min(t^2/(d\epsilon^4),t/\epsilon^2)},
\]
where $c$ is a constant. Choosing $\epsilon = ({t/d})^{1/3}$ results in the theorem.
\end{proof}

\begin{figure}[t]
\centering\subfigure[$D = d / 2$]{\includegraphics[width=0.33\textwidth]{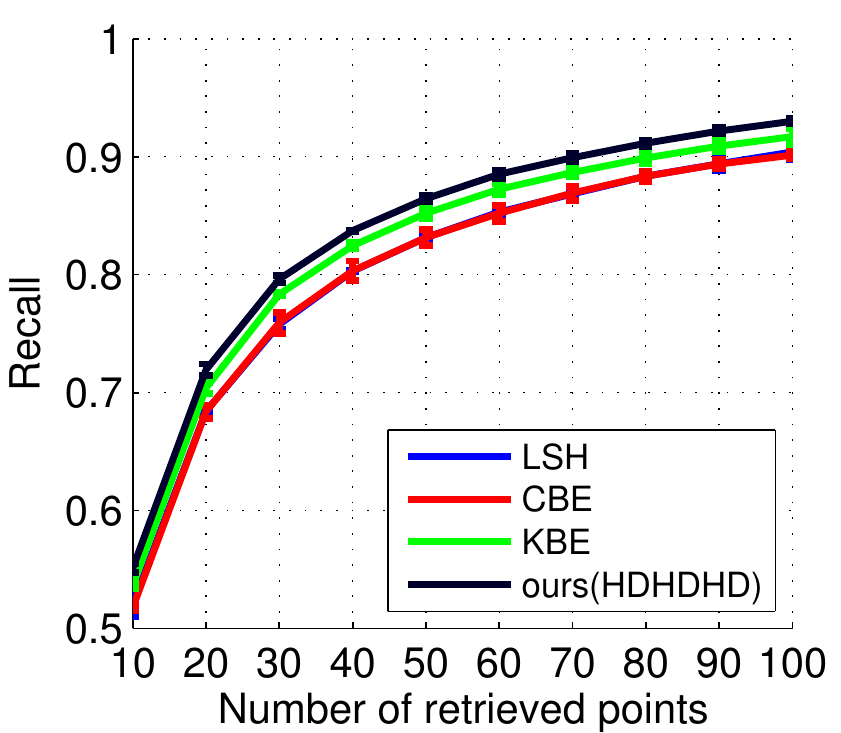}}
\hspace{-0.23cm}
\subfigure[$D = d$]{\includegraphics[width=0.33\textwidth]{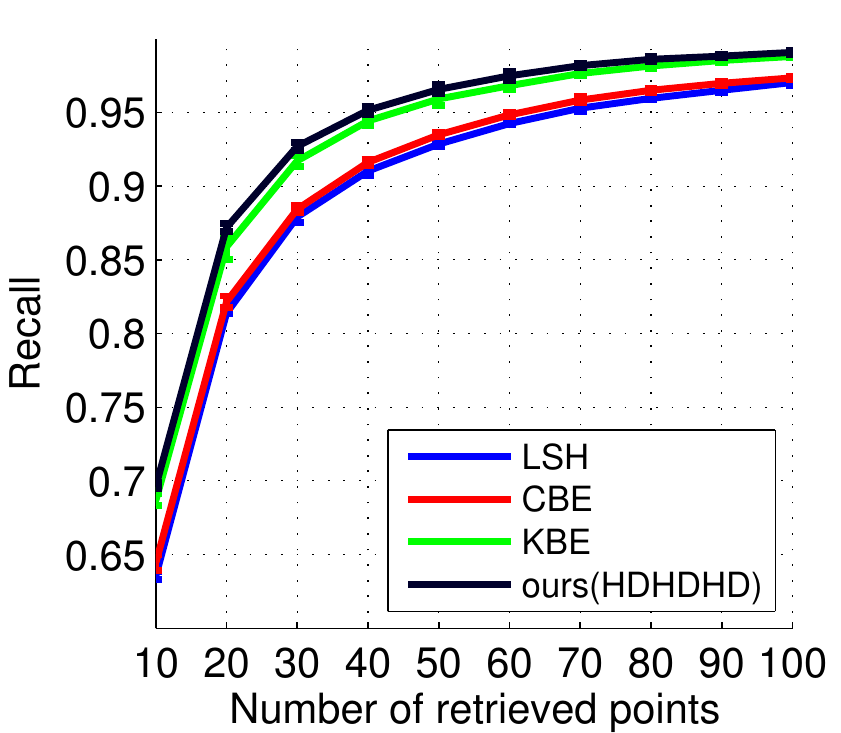}}
\hspace{-0.23cm}
\subfigure[MSE]{\includegraphics[width=0.33\textwidth]{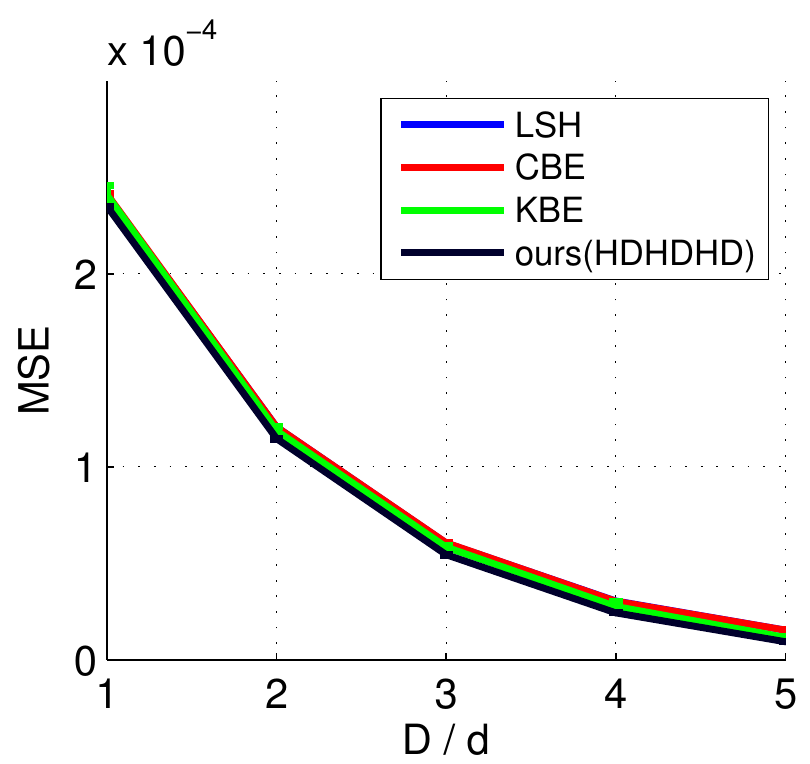}}
\caption{Recall and angular MSE on a 16384-dimensional dataset of natural images \cite{zhang2015fast}.}
\label{fig:binary_embedding}
\end{figure}

\section{Discrete Hadamard-Diagonal Structure in Binary Embedding}
Motivated by the recent advances in using structured matrices in binary embedding, we show empirically that the same type of structured discrete orthogonal matrices (three blocks of Hadamard-Diagonal matrices) can also be applied to approximate angular distances for high-dimensional data. 
Let $\bW \in \mathbb{R}^{D\times d}$ be a random matrix with i.i.d.\ normally distributed entries. The classic Locality Sensitive Hashing (LSH) result shows that the $\sign$ nonlinear map $\phi: \phi(\bx) = \frac{1}{\sqrt{D}} \sign(\bW \bx)$ can be used to approximate the angle, i.e., for any $\bx, \by \in \mathbb{R}^d$
\[
\quad \phi(\bx)^T \phi(\by) \approx \theta(\bx, \by) / \pi. 
\]

We compare random projection based Locality Sensitive Hashing (LSH) \cite{charikar2002similarity}, Circulant Binary Embedding (CBE) \cite{icml14_cbe} and Kronecker Binary Embedding (KBE) \cite{zhang2015fast}. We closely follow the experimental settings of \cite{zhang2015fast}. We choose to compare with  \cite{zhang2015fast} because it proposed to use another type of structured random orthogonal matrix (Kronecker product of orthogonal matrices). As shown in Figure \ref{fig:binary_embedding}, our result (HDHDHD)  provides higher recall and lower angular MSE in comparison with other methods. 
\end{appendices}
\end{document}